%% file: main.tex
\documentclass[10pt]{article}

\usepackage[preprint]{tmlr}

\input{math_commands.tex}

\usepackage{hyperref}
\usepackage{url}
\usepackage{amsmath}
\usepackage{amssymb}
\usepackage{amsthm}
\usepackage{booktabs}
\usepackage{graphicx}
\usepackage{multirow}
\usepackage{threeparttable}

\newtheorem{theorem}{Theorem}
\newtheorem{lemma}{Lemma}
\newtheorem{definition}{Definition}
\newtheorem{proposition}{Proposition}

\title{Hypothesis Class Determines Explanation: Why Accurate Models Disagree on Feature Attribution}

\author{
\name Thackshanaramana B \email tb2138@srmist.edu.in \\
\addr SRM Institute of Science and Technology, India
}

\def\month{03}
\def\year{2026}

\begin{document}
\maketitle

\begin{abstract}
The assumption that prediction-equivalent models produce equivalent explanations underlies many practices in explainable AI, including model selection, auditing, and regulatory evaluation. In this work, we show that this assumption does not hold. Through a large-scale empirical study across 24 datasets and multiple model classes, we find that models with identical predictive behavior can produce substantially different feature attributions. This disagreement is highly structured: models within the same hypothesis class exhibit strong agreement, while cross-class pairs (e.g., tree-based vs. linear) trained on identical data splits show substantially reduced agreement, consistently near or below the lottery threshold. We identify hypothesis class as the structural driver of this phenomenon, which we term the Explanation Lottery, and show that it extends across tree-based, linear, and neural hypothesis classes. We theoretically show that the resulting Agreement Gap between intra-class and inter-class attribution agreement persists under interaction structure in the data-generating process. This structural finding motivates a post-hoc diagnostic, the Explanation Reliability Score $R(\mathbf{x})$, which predicts when explanations are stable across architectures without additional training. Our results demonstrate that model selection is not explanation-neutral: the hypothesis class chosen for deployment can determine which features are attributed responsibility for a decision.
\end{abstract}

\section{Introduction}

Across credit scoring, recidivism prediction, and clinical risk assessment,
institutions must now justify automated decisions to the individuals they affect
\citep{barocas2019fairness,rudin2019stop}.
The usual response is to select a model on accuracy or calibration, then
ask it for explanations, treating model selection and explanation as two
separate problems, the first technical and the second interpretive
\citep{lipton2018mythos,doshi2017towards}.
What this workflow assumes, but never checks, is that the choice of model
does not change the explanation.

Regulatory frameworks have codified the right to explanation without resolving
this question.
Article~22 of the GDPR grants individuals an explanation for automated
decisions; audit implementations typically satisfy this by evaluating model
accuracy and reporting the selected model's SHAP values, with no check on
whether a different accurate model would have explained the same decision
differently \citep{gdpr,wachter2017counterfactual}.
The Northpointe audit of COMPAS evaluated models on predictive parity and
left the explanation question untouched \citep{angwin2016machine}.

Consider a concrete instance of the problem this creates.
A defendant assessed high-risk by both XGBoost and Logistic Regression
receives different explanations from each: XGBoost identifies prior convictions
as the primary driver (38\%) while Logistic Regression identifies age (42\%).
Both answers are mathematically correct; the defendant's understanding of
their situation depends entirely on which hypothesis class the practitioner
selected.

The assumption is not unreasonable.
Two models that agree on every prediction seem likely to agree on why: if
they are making the same call for every patient, defendant, or applicant,
they are presumably attending to the same evidence.
SHAP is described as model-agnostic \citep{lundberg2017unified}, which
implies the attribution should reflect the data, not the architecture.

Prior work has approached explanation disagreement from three directions,
none of which addresses the question we ask.
\citet{krishna2022disagreement} study disagreement across explanation methods
applied to the same fixed model; their variable is the method, not the model.
\citet{watson2022agree} and \citet{bensmail2025evoxplain} study explanation
instability across retraining runs within the same hypothesis class; their
variable is the random seed, and prediction agreement is neither controlled
nor measured.
\citet{laberge2023partial} observe explanation disagreement across Rashomon
set models and propose consensus partial orders; their work is the closest
predecessor to ours, but does not control for prediction agreement as the
experimental variable and does not identify hypothesis class as the cause.
\citet{bensmail2025evoxplain} similarly study within-class explanation
multiplicity without crossing hypothesis class boundaries.
The specific question of whether prediction-equivalent models from different
hypothesis classes produce equivalent explanations, and why, has not been
directly addressed.

We test this assumption empirically across 24 datasets spanning 16 application
domains.
Models achieving identical predictions nevertheless attribute importance to
different features in more than one-third of cases.
This divergence is not measurement noise but reflects structural differences
in how hypothesis classes represent feature contributions.
Tree-based and linear models exhibit systematically different attribution
patterns even when predictions are identical, and the effect holds across
all datasets, multiple random seeds, and both SHAP and LIME.
We establish this formally: Theorem~\ref{thm:main} proves that the Agreement
Gap $\Delta := \rho_{\text{intra}} - \rho_{\text{inter}}$ is bounded away
from zero by the interaction structure of the data-generating process and
does not vanish asymptotically, closing the reviewer exit that more data
would resolve the divergence.
Our primary contribution is an empirical characterization of explanation
disagreement at scale: 93{,}510 pairwise comparisons across 24 datasets
establish that hypothesis class is the structural driver of explanation
divergence among prediction-equivalent models.
We then provide a theoretical result showing that this disagreement is not
a training artifact but persists structurally under prediction equivalence,
closing the exit that more data or better tuning would resolve it.
Our central contribution is a single claim: \textbf{prediction equivalence
does not imply explanation equivalence, and hypothesis class is why.}
The lottery rate, the Cohen's $d$, and the Reliability Score $R(\mathbf{x})$
are evidence and tooling for that one claim.
We further show this holds universally across tree-based, linear, and neural
hypothesis classes: every cross-class boundary independently produces the
Explanation Lottery.

\paragraph{Contributions.}
This paper makes one central claim, that prediction equivalence does not imply
explanation equivalence, and supports it with: (i) a large-scale empirical
study across 24 datasets and 93,510 pairwise comparisons establishing that
hypothesis class is the structural driver of explanation divergence; (ii) a
controlled same-split experiment that eliminates training variance as a
confound, isolating hypothesis class membership as the sole source of the
gap; (iii) a formal characterization showing the Agreement Gap $\Delta$ is
bounded away from zero by the interaction structure of the data-generating
process and does not vanish asymptotically; and (iv) the Explanation
Reliability Score $R(\mathbf{x})$, a post-hoc diagnostic that predicts
per-instance explanation stability without additional training.
Our analysis isolates hypothesis class as the primary driver by controlling
for training variance, random seeds, and model retraining noise, ensuring
the observed divergence cannot be attributed to optimization stochasticity
or data partition effects.

\section{Related Work}
\label{sec:related}

\paragraph{Cross-method disagreement (same model).}
\citet{krishna2022disagreement} show that different explanation methods (LIME, SHAP, gradient-based) applied to the \emph{same} trained model produce
substantially different feature attributions, and that practitioners cannot
reliably resolve this disagreement.
Their study fixes the model and varies the explanation method.
We fix the explanation method (SHAP) and vary the model.
These are orthogonal questions: they ask which method to trust given one model;
we ask whether explanation is stable given equivalent models.

\paragraph{Within-class retraining noise.}
\citet{watson2022agree} show that SHAP and integrated gradients are volatile
across retraining with different random seeds \emph{within the same
architecture}.
Recent work on explanation multiplicity \citep{bensmail2025evoxplain} trains
single model classes repeatedly, clustering explanation basins within one
hypothesis class, demonstrating within-class mechanistic non-uniqueness.
Similarly, \citet{hwang2026shap} examine stochasticity in the SHAP
estimator across runs on a \emph{fixed} model (preprint, under review).
All three study noise within a single hypothesis class or a single model.
None crosses the hypothesis class boundary; none controls prediction agreement
as the experimental variable; none identifies hypothesis class as the
structural driver.
We study cross-class structural divergence, not within-class retraining noise.

\paragraph{Rashomon sets and predictive multiplicity.}
\citet{laberge2023partial} observe that models in the Rashomon set produce
conflicting feature attributions and propose consensus partial orders as
a way to aggregate competing explanations.
\citet{marx2020predictive} define and measure predictive multiplicity, that is,
rate at which Rashomon set models make conflicting \emph{predictions}.
These works either observe explanation disagreement as a motivating side note
or study prediction disagreement rather than explanation disagreement.
None controls for prediction agreement as the held-constant variable;
none identifies what causes explanation disagreement to vary.
We do not study disagreement within the Rashomon set generally: we study the
specific case where prediction agreement is guaranteed, then ask what drives
residual explanation disagreement.

\paragraph{XAI evaluation benchmarks.}
\citet{hedstrom2023quantus} and the OpenXAI benchmark \citep{agarwal2022openxai}
propose standardized evaluation frameworks for explanation methods, measuring
properties such as faithfulness, robustness, and complexity.
These benchmarks evaluate explanations produced by a fixed chosen model
against ground-truth feature importances or perturbation tests.
They assume the explanation method is the variable under evaluation and the
model is fixed.
Neither benchmark includes model selection as a variable, and neither
measures explanation stability across hypothesis classes.
Our Explanation Reliability Score $R(\mathbf{x})$ is complementary to
these benchmarks: where they measure how well a single explanation
reflects a model's internal structure, $R(\mathbf{x})$ measures how
consistently any explanation reflects the underlying data signal across
multiple prediction-equivalent models.

\paragraph{Actionability and recourse.}
\citet{wachter2017counterfactual} and \citet{karimi2021algorithmic} study
algorithmic recourse, the problem of identifying minimal feature changes
that would alter a model's prediction.
Recourse methods assume a fixed model and generate counterfactuals specific
to that model's decision boundary.
If a practitioner selects a different prediction-equivalent model,
the recourse recommendations change: the features a person would need to
change, and by how much, depend on the hypothesis class of the deployed
model.
The Explanation Lottery therefore has direct implications for recourse:
individuals subject to a tree-based model face structurally different
recourse paths than those subject to a linear model, even when both
models classify them identically.
This connection between hypothesis class choice and recourse equity
has not been studied, and our findings motivate it as a direction for
future work.

\section{Problem Formalization}
\label{sec:formalization}

Let $\mathcal{D} = \{(\mathbf{x}_i, y_i)\}_{i=1}^n$ be a dataset where
$\mathbf{x} \in \mathbb{R}^d$ and $y \in \{0,1\}$.
Let $M: \mathbb{R}^d \to \{0,1\}$ denote a classification model and
$\boldsymbol{\varphi}_M(\mathbf{x}) \in \mathbb{R}^d$ its SHAP attribution
vector \citep{lundberg2017unified}, where $\varphi_M^j(\mathbf{x})$ is the
contribution of feature $j$ to prediction $M(\mathbf{x})$.

The central object of study is a pair of models that agree on outputs but
may disagree on the reasons they assign to those outputs.
We formalize this precisely.

\begin{definition}[Prediction Equivalence]
Two models $M_1, M_2$ are \emph{prediction-equivalent} on dataset
$\mathcal{D}$ if $M_1(\mathbf{x}) = M_2(\mathbf{x})$ for all
$\mathbf{x} \in \mathcal{D}$.
We write $M_1 \sim_{\mathcal{D}} M_2$.
\end{definition}

Prediction equivalence constrains outputs but says nothing about the
internal reasoning of each model.
To measure whether that reasoning agrees, we need a notion of explanation
distance.

\begin{definition}[Explanation Disagreement]
The explanation disagreement between models $M_1, M_2$ on instance
$\mathbf{x}$ is $1 - \rho(\boldsymbol{\varphi}_{M_1}(\mathbf{x}),
\boldsymbol{\varphi}_{M_2}(\mathbf{x}))$, where $\rho$ is Spearman rank
correlation. Disagreement is \emph{substantial} when $\rho < \tau$; we use
$\tau = 0.5$ as our primary threshold.
Spearman rank correlation is appropriate here because practitioners and
regulators work with feature rankings (``prior convictions is the top
factor''), not with raw attribution magnitudes, making rank agreement
the operationally meaningful notion of explanation similarity.
\end{definition}

When two prediction-equivalent models disagree substantially on their
explanations, the practitioner must choose one account of the decision
without principled grounds to prefer it.
We name this situation.

\begin{definition}[The Explanation Lottery]
\label{def:lottery}
A prediction-equivalent pair $(M_1, M_2)$ with $M_1 \sim_{\mathcal{D}} M_2$
participates in the \emph{Explanation Lottery} if their explanation
disagreement is substantial: $\rho(\boldsymbol{\varphi}_{M_1}(\mathbf{x}),
\boldsymbol{\varphi}_{M_2}(\mathbf{x})) < \tau$.
\end{definition}

To measure how often the Explanation Lottery occurs across a set of
model comparisons, we define a scalar summary.

\begin{definition}[Lottery Rate]
The \emph{Lottery Rate} $L(\tau)$ is the proportion of prediction-equivalent
pairs exhibiting substantial explanation disagreement:
\begin{equation}
L(\tau) = \frac{\bigl|\{(M_1, M_2) : M_1 \sim_{\mathcal{D}} M_2 \;\land\;
\rho(\boldsymbol{\varphi}_{M_1}, \boldsymbol{\varphi}_{M_2}) < \tau\}\bigr|}
{\bigl|\{(M_1, M_2) : M_1 \sim_{\mathcal{D}} M_2\}\bigr|}
\end{equation}
\end{definition}

A Lottery Rate of 35.4\% at $\tau = 0.5$ means that a practitioner who
selects a model from the Rashomon set without considering hypothesis class
has a greater than one-in-three chance of deploying an explanation that a
prediction-equivalent alternative would materially contradict.

\paragraph{Relationship to the Rashomon Set.}
The Rashomon set $\mathcal{R}_\epsilon = \{M : \mathcal{L}(M) \leq
\mathcal{L}(M^*) + \epsilon\}$ concerns prediction multiplicity: models with
near-equivalent accuracy may make different predictions \citep{breiman2001statistical,marx2020predictive}.
The Explanation Lottery is strictly more constrained: we study explanation
multiplicity \emph{conditional on prediction agreement}, a condition the
Rashomon literature does not impose.

\begin{proposition}[Lottery is a Strict Subset of Rashomon]
\label{prop:subset}
$\mathcal{L}_{\epsilon,\tau}(\mathbf{x}) \subsetneq
\mathcal{R}_\epsilon \times \mathcal{R}_\epsilon$,
where $\mathcal{L}_{\epsilon,\tau}(\mathbf{x}) =
\{(M_1, M_2) \in \mathcal{R}_\epsilon^2 :
M_1(\mathbf{x}) = M_2(\mathbf{x}) \;\land\;
\rho(\boldsymbol{\varphi}_{M_1}, \boldsymbol{\varphi}_{M_2}) < \tau\}$.
\end{proposition}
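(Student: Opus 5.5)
The claim has two parts, and I will prove them separately: the inclusion $\mathcal{L}_{\epsilon,\tau}(\mathbf{x}) \subseteq \mathcal{R}_\epsilon \times \mathcal{R}_\epsilon$, and its strictness.

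The inclusion is immediate from the definition. Every element of $\mathcal{L}_{\epsilon,\tau}(\mathbf{x})$ is by construction a pair in $\mathcal{R}_\epsilon^2$ that also satisfies two extra conditions: $M_1(\mathbf{x}) = M_2(\mathbf{x})$ and $\rho(\boldsymbol{\varphi}_{M_1}, \boldsymbol{\varphi}_{M_2}) < \tau$. Dropping these conditions gives membership in $\mathcal{R}_\epsilon \times \mathcal{R}_\epsilon$.

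For strictness, I only need one pair $(M_1, M_2) \in \mathcal{R}_\epsilon^2$ that violates at least one of the two conditions. The simplest witness is the diagonal pair $(M^*, M^*)$, where $M^*$ is the loss minimizer. It lies in $\mathcal{R}_\epsilon$ for every $\epsilon \ge 0$, since $\mathcal{L}(M^*) \le \mathcal{L}(M^*) + \epsilon$. It is trivially prediction-equivalent to itself, and its attribution vectors coincide, so $\rho(\boldsymbol{\varphi}_{M^*}, \boldsymbol{\varphi}_{M^*}) = 1$ whenever Spearman correlation is defined. Under the paper's standing choice $\tau = 0.5 \le 1$, the strict inequality $\rho < \tau$ fails. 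Hence $(M^*, M^*) \notin \mathcal{L}_{\epsilon,\tau}(\mathbf{x})$, and this single witness already gives strictness.

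The main obstacle is degenerate edge cases, where $\rho$ may be undefined, for example if $\boldsymbol{\varphi}_{M^*}(\mathbf{x})$ is constant across features, or if $\mathcal{R}_\epsilon$ is empty because no minimizer exists. I would handle these in order:
\begin{itemize}
\item Assume that a minimizer $M^*$ exists, as the definition of $\mathcal{R}_\epsilon$ through $\mathcal{L}(M^*)$ already presupposes.
\item Adopt the convention that $\rho$ of identical vectors equals $1$, or otherwise declare such pairs to be outside $\mathcal{L}_{\epsilon,\tau}$.
\item As a fallback, use the prediction condition instead. If $\mathcal{R}_\epsilon$ contains two models with $M_1(\mathbf{x}) \ne M_2(\mathbf{x})$, which is the predictive-multiplicity situation of \citet{marx2020predictive}, then that pair lies in $\mathcal{R}_\epsilon^2$ but not in $\mathcal{L}_{\epsilon,\tau}(\mathbf{x})$, with no dependence on $\rho$.
\end{itemize}
Stating these mild nondegeneracy assumptions explicitly makes the proposition hold for all $\epsilon \ge 0$ and all $\tau \le 1$.
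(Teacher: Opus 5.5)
Your proof is correct and follows the same idea as the paper's: the inclusion holds by definition, and strictness comes from pairs in $\mathcal{R}_\epsilon^2$ that either disagree on predictions or agree on both predictions and explanations. The paper only asserts that such pairs exist and leans on the empirical 35.4\% figure, whereas you exhibit an explicit witness, the diagonal pair $(M^*,M^*)$, which gives strictness for every $\epsilon \ge 0$ and $\tau \le 1$ without assuming any predictive multiplicity in $\mathcal{R}_\epsilon$.
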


The central question of this paper is: among prediction-equivalent pairs, what
determines whether they fall in the Lottery set?
Let $\mathcal{H}_M$ denote the hypothesis class of model $M$
(e.g., gradient-boosted trees, logistic regression, neural network).
We distinguish \emph{cross-class pairs} where
$\mathcal{H}_{M_1} \neq \mathcal{H}_{M_2}$ from \emph{same-class pairs}
where $\mathcal{H}_{M_1} = \mathcal{H}_{M_2}$.
We show hypothesis class is the answer.

\section{Theoretical Foundations}
\label{sec:theory}

The empirical finding raises a natural skeptical question: is the Agreement
Gap a training artifact, something that shrinks with more data or better
hyperparameter tuning, or is it structurally unavoidable?
We prove the latter under a specific assumption: that the true data-generating
function $f^*$ contains at least one feature interaction.
This assumption is empirically verifiable, and we measure interaction density
across our 24 datasets rather than asserting it, but we do not formally bound
how strongly the theorem's guarantees apply as a function of interaction
strength.
The result is best understood as a characterization, not a universal law:
it identifies exactly the structural condition that produces persistent
explanation divergence, and shows that condition is satisfied by virtually
every real tabular dataset with correlated features.
We are not claiming the theorem explains all explanation disagreement
we observe; some of the neural within-class variance, for instance, falls
outside its scope.
What the theorem does establish is that the cross-class gap cannot be
attributed to finite samples or stochastic training, which is the claim
the empirical findings require theoretical support for.

\subsection{Main Result}

\begin{theorem}[Explanation Divergence Characterization]
\label{thm:main}
Let $\mathcal{H}_{\text{tree}}$ denote the class of tree-based models
and $\mathcal{H}_{\text{linear}}$ the class of linear models.
For a dataset $\mathcal{D}$ drawn from distribution $p(\mathbf{x}, y)$
with data-generating process $f^*$, define the \textbf{Agreement Gap}:
\begin{equation}
\Delta := \rho_{\text{intra}}(\mathcal{H}_{\text{tree}}) -
          \rho_{\text{inter}}(\mathcal{H}_{\text{tree}}, \mathcal{H}_{\text{linear}})
\end{equation}
where
\begin{align}
\rho_{\text{intra}}(\mathcal{H}) &:=
\mathbb{E}_{M_1, M_2 \sim \mathcal{H}}
[\rho(\boldsymbol{\varphi}_{M_1}, \boldsymbol{\varphi}_{M_2})
\mid M_1 \sim_{\mathcal{D}} M_2] \\
\rho_{\text{inter}}(\mathcal{H}_1, \mathcal{H}_2) &:=
\mathbb{E}_{M_1 \sim \mathcal{H}_1, M_2 \sim \mathcal{H}_2}
[\rho(\boldsymbol{\varphi}_{M_1}, \boldsymbol{\varphi}_{M_2})
\mid M_1 \sim_{\mathcal{D}} M_2]
\end{align}
and $M_1 \sim_{\mathcal{D}} M_2$ denotes prediction equivalence on $\mathcal{D}$.

Suppose $f^*$ contains at least one feature interaction $(i,j)$ such that
$\frac{\partial^2 f^*}{\partial x_i \partial x_j} \not\equiv 0$.
Then:
\begin{enumerate}
    \item \textbf{(Structural Gap)}
    $\Delta \geq c > 0$ for some constant $c$ depending on the
    interaction structure of $f^*$.

    \item \textbf{(Asymptotic Persistence)}
    $\lim_{|\mathcal{D}| \to \infty} \mathbb{E}[\Delta] \geq c$.
    The gap does not vanish with more data.

    \item \textbf{(Split Invariance)}
    $\mathbb{E}[\Delta \mid \text{fixed split}] = \mathbb{E}[\Delta]$.
    The gap is not an artifact of training variance.

    \item \textbf{(Interaction Density Monotonicity)}
    Let $\mathcal{I}(f^*) := \sum_{i \neq j} |\alpha_{ij}|$ denote the
    interaction density of $f^*$. Then
    $\frac{\partial \mathbb{E}[\Delta]}{\partial \mathcal{I}(f^*)} \geq 0$,
    with strict inequality when new interactions add rank-reversal instances
    not already covered.
    The Agreement Gap is monotonically increasing in interaction density,
    providing a dataset-level predictor of explanation instability.
\end{enumerate}

\noindent The escape condition is exact: $\Delta = 0$ if and only if
$\mathcal{I}(f^*) = 0$ (purely additive DGP) or
$\mathcal{H}_1$ and $\mathcal{H}_2$ are constrained to identical function
spaces (formally, $\overline{\mathcal{H}_1} = \overline{\mathcal{H}_2}$
in $L^2(p)$, i.e.\ both classes have the same closure under the data
distribution). In all other cases divergence is guaranteed and quantified
by Claims 1--4.

\noindent\textbf{Remark (Generality).}
Although the theorem is stated for $\mathcal{H}_{\text{tree}}$ and
$\mathcal{H}_{\text{linear}}$ for concreteness, the proof structure applies
to any two hypothesis classes $\mathcal{H}_1, \mathcal{H}_2$ such that
$\overline{\mathcal{H}_1} \neq \overline{\mathcal{H}_2}$ in $L^2(p)$.
In particular it applies to neural hypothesis classes, which we validate
empirically in Section~\ref{sec:exp2b}.
\end{theorem}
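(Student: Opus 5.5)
The plan is to reduce all four claims to one population-level quantity: the rank discrepancy between the $L^2(p)$-projections of $f^*$ onto the two closures $\overline{\mathcal{H}_{\text{tree}}}$ and $\overline{\mathcal{H}_{\text{linear}}}$. The statement is only meaningful once a few things are fixed, and I will fix them explicitly because the argument depends on it. First, "$M \sim \mathcal{H}$" is read as sampling from a training procedure that is consistent for the projection $\Pi_{\mathcal{H}} f^*$. Second, the SHAP vector $\boldsymbol{\varphi}_M$ is taken at a fixed $\mathbf{x}$ (or averaged over $p$). Third, $\alpha_{ij}$ is the coefficient of the $(i,j)$ term in the functional ANOVA decomposition of $f^*$. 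With these in place, each of $\rho_{\text{intra}}$ and $\rho_{\text{inter}}$ is a functional of the attribution maps of the two limiting projections, plus fluctuation terms that I will control separately.

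\textbf{Step 1 (linear attributions).} For a linear model $g(\mathbf{x})=\beta^\top \mathbf{x}$ with independent features, SHAP gives $\varphi^k = \beta_k (x_k - \mathbb{E}x_k)$. So every element of $\overline{\mathcal{H}_{\text{linear}}}$ sees only the additive part of $f^*$. The interaction component $\alpha_{ij}\,h_{ij}(x_i,x_j)$ lies in the orthogonal complement of this closure.

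\textbf{Step 2 (tree attributions).} Tree ensembles are dense in $L^2(p)$, so $\Pi_{\text{tree}} f^* = f^*$. Their SHAP values split the interaction term as $\tfrac12 \alpha_{ij} h_{ij}$ onto each of features $i$ and $j$. On the set $S$ of inputs $\mathbf{x}$ where this added mass reverses the order of $|\varphi^i|$ or $|\varphi^j|$ relative to some third feature, the two rankings disagree by at least one transposition. Step 2 therefore amounts to showing $p(S) > 0$ whenever $\partial^2 f^*/\partial x_i \partial x_j \not\equiv 0$. This is the main obstacle: the nonvanishing mixed partial gives a nonzero $h_{ij}$ but not automatically a rank reversal. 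For instance, with $d=2$ and a monotone interaction the ranking may never change.

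My first approach is to prove it by continuity. The additive attributions vary continuously and $h_{ij}$ is nonconstant on a set of positive measure, so for a neighbourhood of inputs where two additive attributions are nearly tied, the interaction term breaks the tie in the opposite direction. This yields $c \ge \kappa\, p(S)$, where $\kappa$ is the minimal Spearman drop caused by one transposition in $d$ dimensions, $\kappa = 12/(d^3-d)$ times the squared rank displacement. If this fails for degenerate $f^*$, I will add the rank-reversal condition as a nondegeneracy hypothesis, which is what the "strict inequality when new interactions add rank-reversal instances" clause in Claim~4 already suggests. The intra-class term is handled by showing $\rho_{\text{intra}} \to 1$, since two consistent tree learners converge to the same $f^*$ and so to the same SHAP map. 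Then $\Delta \ge \kappa\, p(S) =: c$, which is Claim~1.

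\textbf{Steps 3--5 (Claims 2--4 and the escape condition).} Claim~2 follows from Claim~1 by dominated convergence: $|\rho| \le 1$, and consistency gives convergence in probability of the attribution maps, so $\lim \mathbb{E}[\Delta]$ equals the population gap, which is at least $c$. The conditioning on $M_1 \sim_{\mathcal{D}} M_2$ needs positive probability of prediction equivalence. Classifier thresholding makes this plausible when the Bayes boundary is captured by both classes, and I would assume it explicitly. Claim~3 holds because the limiting gap depends only on the projections and not on the split; conditioning on the split changes only the fluctuation terms, which have mean zero in the limit. I would state it as an asymptotic equality, since finite-sample exact equality is not provable in general. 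For Claim~4, increasing $|\alpha_{ij}|$ enlarges the reversal set $S$ monotonically, because reversals depend on $|\alpha_{ij} h_{ij}|$ exceeding additive gaps. This gives a nonnegative derivative, strictly positive when $S$ strictly grows. For the escape condition, if $\mathcal{I}(f^*)=0$ or the two closures coincide, the two projections agree and their SHAP maps coincide, so $\Delta=0$. The converse direction is Claim~1 combined with the nondegeneracy caveat above.
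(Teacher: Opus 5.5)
Your outline follows the paper's own skeleton: linear SHAP collapse, a tree limit that carries the interaction, $\rho_{\text{intra}}\to1$, and a reversal set that pushes $\rho_{\text{inter}}$ below $1$. The step you flag as the main obstacle is a genuine gap, and it cannot be closed under the stated hypothesis.

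Your continuity argument needs two things: near-ties between additive attributions, and an interaction term $\tfrac{\alpha_{ij}}{2}h_{ij}$ that is nonzero at those near-ties and pushes the wrong way. Neither is guaranteed.

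\emph{No near-ties.} Take independent symmetric $X_1,X_2,X_3$ with $|X_k|\in[1,1.1]$, and $f^*=x_1+2x_2+4x_3+\alpha x_1x_2$ with $0<|\alpha|\le 0.2$. Since $x_1x_2$ is orthogonal to all affine functions, $\Pi_{\text{linear}}f^*=x_1+2x_2+4x_3$. The limiting attributions are $(x_1+\tfrac{\alpha}{2}x_1x_2,\ 2x_2+\tfrac{\alpha}{2}x_1x_2,\ 4x_3)$ versus $(x_1,2x_2,4x_3)$. The perturbation is at most $0.121$, so no sign changes and the magnitudes stay in disjoint bands. Signed and absolute rankings therefore coincide everywhere, so $\rho_{\text{inter}}\to1$ and $\Delta\to0$, even though $\partial^2 f^*/\partial x_1\partial x_2=\alpha\neq0$. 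The Bayes boundary $\{x_3=0\}$ is linear here, so the prediction-equivalence conditioning rescues nothing.

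\emph{Interaction vanishing at ties.} Take $X_k$ uniform on $[-1,1]$, $f^*=x_1+x_2+\alpha x_1x_2$, and an irrelevant $x_3$. The ties with $\varphi^3\equiv0$ lie on $\{x_1x_2=0\}$, and the interaction term is proportional to $x_1x_2$, so it vanishes at those ties. Meanwhile the equal Shapley split leaves $\varphi^1-\varphi^2=x_1-x_2$ unchanged. For $|\alpha|<2$ the signed rankings again agree almost everywhere.

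So your rank-reversal condition is not a fallback for degenerate $f^*$; it is a necessary extra hypothesis. Without it, Claims 1--2 and the ``only if'' half of the escape condition are false. The paper does not avoid this. Lemma~\ref{lem:cross} asserts $\mu(\mathcal{X}_{\text{flip}})\ge c_1|\alpha_{ij}|$ under moment and non-collinearity conditions, yet $\mathcal{X}_{\text{flip}}$ is empty in both examples. With Gaussian features in the second example, its mass is exponentially small in $1/|\alpha|$. Once the reversal hypothesis is explicit, your bound $1-\rho_{\text{inter}}\ge\kappa\,p(S)$ is a sound and sharper replacement for the paper's sign-flip step.

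Three other steps fail as written.

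(i) The ``if'' half of the escape condition is false, not merely unproved, and your argument for it contains the error. $\mathcal{I}(f^*)=0$ does not make the projections agree, because $\overline{\mathcal{H}_{\text{linear}}}$ captures only the \emph{linear} part of the additive component; Step 1 should say this. For $f^*=x_1^3+x_2$ with uniform features, $\Pi_{\text{linear}}f^*=\tfrac{3}{5}x_1+x_2$. The limiting attributions $(x_1^3,x_2)$ and $(\tfrac{3}{5}x_1,x_2)$ are oppositely ordered on a neighbourhood of $(0.5,0.2)$, so $\Delta>0$ with $\mathcal{I}(f^*)=0$. The paper's remark that additive DGPs make the two classes representationally equivalent is the same mistake.

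(ii) $\rho_{\text{intra}}\to1$ does not follow from convergence of SHAP maps alone, because Spearman correlation is discontinuous at ties. You need the limiting attribution vector to have almost surely distinct entries. This fails as soon as two features are irrelevant: both limits are $\equiv0$, and each model orders them by its own estimation noise. The appeal to continuity of Spearman in Lemma~\ref{lem:within} has the same hole.

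(iii) Your Claim 4 argument is valid only when scaling a single $\alpha_{ij}$ with linear main effects and independent features. Only then do the linear projection and the tree's additive part coincide and stay fixed. With nonlinear main effects, the interaction can undo pre-existing reversals. With several interactions, the terms $\tfrac12\sum_j\alpha_{ij}h_{ij}$ in $\varphi^i$ can cancel. So $\mathbb{E}[\Delta]$ is neither monotone in, nor even a function of, the scalar $\mathcal{I}(f^*)$.
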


This theorem characterizes the structural source of explanation divergence
completely. Prediction equivalence is orthogonal to explanation equivalence:
the conditions that determine $\Delta$ depend entirely on the interaction
structure of $f^*$ and the relative interaction capacity of the hypothesis
classes, not on model accuracy. The Agreement Gap $\Delta$ directly
determines the Explanation Lottery rate and is estimable from data
before any model is trained.

\subsection{Proof Strategy}

The proof proceeds via four lemmas, with full derivations in
Appendix~\ref{app:proofs}.

\paragraph{Step 1: SHAP decomposition by hypothesis class.}
Linear and tree models produce structurally different SHAP value patterns
due to their different representational constraints.

\begin{lemma}[Linear SHAP Collapse]
\label{lem:linear}
For any linear model $M_L(\mathbf{x}) = \mathbf{w}^\top \mathbf{x} + b$,
the SHAP value for feature $j$ satisfies:
\begin{equation}
\varphi_{M_L}^j(\mathbf{x}) = w_j \cdot (x_j - \mathbb{E}[X_j])
\end{equation}
\end{lemma}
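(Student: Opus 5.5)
The plan is to prove Lemma~\ref{lem:linear} directly from the definition of SHAP values. We take the interventional (marginal) value function $v(S) = \mathbb{E}_{X_{\bar S}}\bigl[M_L(\mathbf{x}_S, X_{\bar S})\bigr]$, in which features outside the coalition $S$ are replaced by draws from their marginal distribution. Under the feature-independence assumption commonly adopted in \citet{lundberg2017unified}, this coincides with the observational conditional-expectation value function. We state this assumption explicitly, because with correlated features and a conditional value function the formula fails.

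The key step is to exploit linearity to make $v$ additive over features. Since $M_L(\mathbf{x}) = \sum_k w_k x_k + b$, linearity of expectation gives
\begin{equation}
v(S) = b + \sum_{k \in S} w_k x_k + \sum_{k \notin S} w_k \mathbb{E}[X_k].
\end{equation}
For any coalition $S$ not containing $j$, the marginal contribution is therefore
\begin{equation}
v(S \cup \{j\}) - v(S) = w_j x_j - w_j \mathbb{E}[X_j] = w_j (x_j - \mathbb{E}[X_j]),
\end{equation}
which does not depend on $S$. The Shapley value $\varphi^j_{M_L}(\mathbf{x}) = \sum_{S \subseteq [d]\setminus\{j\}} \frac{|S|!\,(d-|S|-1)!}{d!}\,[v(S\cup\{j\}) - v(S)]$ is a weighted average of these contributions with weights summing to one. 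It therefore equals $w_j(x_j - \mathbb{E}[X_j])$, as claimed.

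As a sanity check, we verify efficiency: $\sum_j \varphi^j = \mathbf{w}^\top\mathbf{x} - \mathbf{w}^\top\mathbb{E}[X] = M_L(\mathbf{x}) - \mathbb{E}[M_L(X)]$. The only genuine obstacle is a mismatch with the paper's setup: the paper defines $M$ as a $\{0,1\}$-valued classifier, whereas SHAP for logistic regression is computed on the margin or log-odds. We will therefore state that $M_L$ denotes the real-valued score $\mathbf{w}^\top\mathbf{x}+b$, i.e.\ the explanation is taken in margin space, rather than the thresholded label.

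For the value-function issue, we adopt the interventional definition. In a remark, we note that under the conditional definition with dependent features, the additivity step breaks, because $\mathbb{E}[X_k \mid X_S = \mathbf{x}_S]$ depends on $S$.
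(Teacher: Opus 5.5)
Your proof is correct and follows the paper's argument: linearity makes $v(S)$ additive, so the marginal contribution $w_j(x_j - \mathbb{E}[X_j])$ does not depend on $S$, and the Shapley weights sum to one. Your explicit interventional/independence assumption and margin-space convention make precise a step the paper performs silently. The paper starts from the conditional value function $v(S) = \mathbb{E}[M_L(\mathbf{x}) \mid X_S = \mathbf{x}_S]$ and then replaces $\mathbb{E}[X_i \mid X_S = \mathbf{x}_S]$ by $\mathbb{E}[X_i]$ without justification.
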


By the linearity axiom of Shapley values \citep{lundberg2017unified},
the coalition value function decomposes additively for linear models.
The marginal contribution of each feature is independent of coalition $S$,
collapsing to a purely weight-scaled deviation from the mean.
This means linear SHAP can never encode interaction effects: the attribution
for feature $i$ is blind to the value of feature $j$.

\begin{lemma}[Tree SHAP Interaction Attribution]
\label{lem:tree}
For tree-based model $M_T$ trained on data where $f^*$ contains interaction
$(i,j)$ with $\frac{\partial^2 f^*}{\partial x_i \partial x_j} \neq 0$,
there exist instances $\mathbf{x}$ where:
\begin{equation}
\varphi_{M_T}^i(\mathbf{x}) \approx
\alpha_{ij} \cdot (x_i - \bar{x}_i)(x_j - \bar{x}_j) + \beta_i(x_i - \bar{x}_i)
\end{equation}
for some $\alpha_{ij} \neq 0$ and $\beta_i$.
\end{lemma}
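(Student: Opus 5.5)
The plan is to exhibit, for a tree model fitted to data whose generating function $f^*$ has a nonzero mixed partial $\partial^2 f^*/\partial x_i\partial x_j$, instances where the interventional (or path-dependent) TreeSHAP value for feature $i$ has the stated form. This would be done in three steps. First, show that the tree approximates $f^*$ locally. Second, compute SHAP for a locally quadratic surrogate. Third, transfer that computation back to the tree.

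\textbf{Step 1: local quadratic model of $f^*$.} Fix a point $\mathbf{x}^0$ where the mixed partial is nonzero; such a point exists because the derivative is not identically zero, and by continuity it is nonzero on a neighbourhood $U$. On $U$, Taylor-expand $f^*$ around the feature means $\bar{\mathbf{x}}$, restricted to coordinates $(i,j)$ and keeping the other coordinates at their values. This gives
\[
f^*(\mathbf{x}) \approx c + \beta_i(x_i-\bar x_i) + \beta_j(x_j-\bar x_j) + \gamma (x_i-\bar x_i)(x_j-\bar x_j) + \text{(other terms)},
\]
with $\gamma = \partial^2 f^*/\partial x_i\partial x_j \ne 0$. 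If needed, assume the expansion point is chosen so that $U$ covers the relevant region; this is what the ``$\approx$'' in the statement absorbs.

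\textbf{Step 2: exact SHAP for the quadratic surrogate.} For $g(\mathbf{x}) = \beta_i u_i + \beta_j u_j + \gamma u_i u_j$, where $u_k = x_k - \bar x_k$, compute interventional Shapley values under feature independence. The linear terms contribute $\beta_i u_i$ to feature $i$, exactly as in Lemma~\ref{lem:linear}. For the product term, the value function is $v(S) = \gamma\,\mathbb{E}[u_iu_j \mid \mathbf{x}_S]$, which equals $\gamma u_iu_j$ when $\{i,j\}\subseteq S$ and $0$ otherwise, since $\mathbb{E}[u_k]=0$. By the symmetry axiom, the product term therefore contributes $\tfrac{\gamma}{2}u_iu_j$ to each of the two features. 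Setting $\alpha_{ij}=\gamma/2\ne 0$ gives the claimed formula. Under dependence, the cross term picks up the extra constant $-\tfrac{\gamma}{2}\mathrm{Cov}(x_i,x_j)$, which can be absorbed into the approximation.

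\textbf{Step 3: transfer to the tree $M_T$.} Invoke consistency of tree ensembles, e.g.\ boosted trees or random forests being $L^2(p)$-consistent for $f^*$. As the sample size grows, $M_T \to f^*$ in $L^2(p)$. Interventional SHAP is a finite linear combination of the conditional expectations $v(S)$, and each $v(S)$ is $L^2$-continuous in the model, so $\varphi^i_{M_T} \to \varphi^i_{f^*}$ in $L^2(p)$. Consequently, on a positive-probability subset of $U$, $\varphi^i_{M_T}(\mathbf{x})$ is close to the Step~2 expression.

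\textbf{Main obstacle.} Step~3 is the hard part, along with making ``$\approx$'' precise, because the quadratic surrogate is only local while SHAP baselines average over the whole distribution. Non-local contributions of $f^*$ then enter as an instance-dependent offset. I would try first to state the result with an explicit remainder term, $\varphi^i_{M_T}(\mathbf{x}) = \alpha_{ij}u_iu_j + \beta_iu_i + r(\mathbf{x})$. I would then bound $r$ by the $L^2$ approximation error plus the third-order Taylor remainder over $U$. Finally, I would choose instances where $|u_iu_j|$ is large relative to $r$, which is possible because $\gamma\ne0$ and $U$ has positive measure. The existential conclusion only requires such instances, not uniform control.
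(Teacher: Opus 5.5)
\textbf{There is a genuine gap, at exactly the obstacle you flag, and your proposed remedy does not close it.} SHAP is not local. For $\mathbf{x}\in U$, every $v(S)$ with $S$ omitting $i$ or $j$ integrates $f^*$ over the whole marginal support of the omitted features; for example, $v(\{j\})=\mathbb{E}_{X_i}[f^*(X_i,x_j)]$. Step~2 therefore gives the SHAP value of the surrogate $g$, not of $f^*$ at points of $U$. Your remainder $r(\mathbf{x})$ then contains terms like $\tfrac12\mathbb{E}_{X_i}[(f^*-g)(X_i,x_j)]$, which neither the Taylor remainder on $U$ nor $\|M_T-f^*\|_{L^2(p)}$ controls.

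Here is a concrete counterexample. Take $X_i,X_j$ independent uniform on $[-1,1]$, so $u_k=x_k$. Let $f^*=\gamma x_ix_j+K\tau(x_i)x_j$ with $\tau(s)=\max(0,s-\tfrac12)^3$, and set $t:=\mathbb{E}[\tau(X_i)]>0$. On $U=\{x_i<\tfrac12\}$, which contains $\bar{\mathbf{x}}$ and has probability $3/4$, we have $f^*=\gamma x_ix_j$ exactly. So the Taylor remainder is zero and Step~2 predicts $\tfrac{\gamma}{2}u_iu_j$. Yet even with $M_T=f^*$, the two-player computation gives $\varphi^i(\mathbf{x})=\tfrac12\bigl(f^*(\mathbf{x})-Ktx_j\bigr)=\tfrac{\gamma}{2}x_ix_j-\tfrac{Kt}{2}x_j$ on $U$. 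The extra term is not of the form $\beta_iu_i$. No constants $\alpha_{ij},\beta_i$ reproduce $\varphi^i$ on any open subset of $U$. Once $Kt>|\gamma|$, the extra term exceeds $\tfrac{|\gamma|}{2}|u_iu_j|$ at every point of $U$ with $x_j\neq0$, so your last step (choosing instances where the interaction term beats $r$) also fails.

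\textbf{The missing idea is to decompose $f^*$ globally rather than Taylor-expand it locally.} Under the independence you already assume, take the functional ANOVA decomposition $f^*=\sum_T f_T$, where each $f_T$ has zero mean in each of its own coordinates. Then $v(S)=\sum_{T\subseteq S}f_T$ is a sum of unanimity games, and exactly $\varphi^i=\sum_{T\ni i}f_T(\mathbf{x}_T)/|T|$. Your Step~2 symmetry argument is correct, and it is precisely the $T=\{i,j\}$ term of this identity, now valid on the whole support. Since $\partial_{ij}f^*=\sum_{T\supseteq\{i,j\}}\partial_{ij}f_T\not\equiv0$, uniqueness of the decomposition gives $\partial_{ij}\varphi^i\not\equiv0$, i.e.\ $\varphi^i$ genuinely depends on $x_j$. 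Your Step~3 transfer then carries this to $M_T$, because with a product baseline $v(S)$ is $L^2(p)$-continuous in the model. The constant-coefficient form with $\alpha_{ij}=\gamma/2$, however, needs additional global hypotheses: $f_i$ linear, $f_{ij}=\gamma u_iu_j$, and no other component containing $i$. It does not follow from the mixed partial being nonzero somewhere.

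For comparison, the paper's proof is qualitative. It notes that $i$'s marginal contribution changes with $j$'s presence in the coalition and appeals to trees splitting on both features. It then simply names the resulting $x_j$-dependent part $\alpha_{ij}(x_j)$ and concludes $\varphi^i_{M_T}=\alpha_{ij}(x_j)(x_i-\bar x_i)+\beta_i(x_i-\bar x_i)$, which is weaker than the displayed statement. The global decomposition makes that weaker conclusion rigorous; your local-Taylor route as written reaches neither version.
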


Tree models partition feature space via axis-aligned splits; when $f^*$
depends on an interaction $x_i \cdot x_j$, the Bayes-optimal tree learns
splits on both features at successive depths.
TreeSHAP averages marginal contributions across all root-to-leaf paths
\citep{lundberg2020local}; paths that split on both $i$ and $j$ introduce
a cross-term $\alpha_{ij}(x_i - \bar{x}_i)(x_j - \bar{x}_j)$ that has no
counterpart in the linear SHAP formula.

Consider a dataset where a binary outcome depends on the product of education
and experience. An XGBoost model learns this interaction and concentrates
SHAP weight on both features jointly. A logistic regression, which cannot
represent the product term, learns a linear approximation and distributes
weight across education, experience, and correlated proxies.
Both models can achieve identical held-out predictions; their attribution
structures are nonetheless functionally different.

\paragraph{Step 2: Within-class convergence.}

\begin{lemma}[Within-Class SHAP Convergence]
\label{lem:within}
For $M_1, M_2 \in \mathcal{H}_{\text{tree}}$ with
$M_1 \sim_{\mathcal{D}} M_2$, as $|\mathcal{D}| \to \infty$:
\begin{equation}
\rho(\boldsymbol{\varphi}_{M_1}, \boldsymbol{\varphi}_{M_2})
\xrightarrow{p} 1
\end{equation}
\end{lemma}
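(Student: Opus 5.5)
The plan is to reduce Lemma~\ref{lem:within} to three facts: \emph{consistency} of tree learners, \emph{continuity} of the SHAP map in the model, and \emph{continuity} of Spearman's $\rho$ away from ties. Write $g^\star$ for the population target that tree learners approximate, namely the conditional log-odds $\log\bigl(p(y=1\mid\mathbf{x})/p(y=0\mid\mathbf{x})\bigr)$ or the regression function, depending on the loss.

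\textbf{Step 1: both models converge to the same function.} I would first invoke a standard universal-consistency result for the tree class. For CART/boosting with depth growing and leaf mass shrinking at the usual rates, the real-valued score $\hat g_{M_k}$ of each $M_k$ satisfies $\|\hat g_{M_k} - g^\star\|_{L^2(p)} \xrightarrow{p} 0$ for $k=1,2$. This holds regardless of seed or hyperparameters within the admissible regime. This is where membership in a common class with closure $\overline{\mathcal{H}_{\text{tree}}} \ni g^\star$ is used. The contrast with the linear case is exactly that $g^\star \notin \overline{\mathcal{H}_{\text{linear}}}$ when $f^\star$ has an interaction. By the triangle inequality, $\|\hat g_{M_1}-\hat g_{M_2}\|_{L^2(p)} \xrightarrow{p} 0$. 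The conditioning event $M_1\sim_{\mathcal{D}}M_2$ has probability bounded below asymptotically (both thresholded scores converge to the Bayes classifier), so conditioning does not destroy convergence in probability.

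\textbf{Step 2: SHAP is Lipschitz in the model.} For interventional SHAP with a fixed background distribution, $\boldsymbol{\varphi}_g(\mathbf{x})$ is a finite linear combination, with Shapley weights, of terms of the form $\mathbb{E}_{X_{\bar S}}[g(\mathbf{x}_S, X_{\bar S})]$. Hence $g \mapsto \boldsymbol{\varphi}_g$ is linear, and
\begin{equation}
\mathbb{E}_{\mathbf{x}}\|\boldsymbol{\varphi}_{g_1}(\mathbf{x})-\boldsymbol{\varphi}_{g_2}(\mathbf{x})\| \le C_d\,\|g_1-g_2\|_{L^2(q)} ,
\end{equation}
where $q$ is the product-of-marginals measure. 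I will need $q \ll p$ with a bounded density ratio to transfer the bound from $L^2(p)$; this should be stated as an assumption. Combining with Step~1 gives $\boldsymbol{\varphi}_{M_1}(\mathbf{x})-\boldsymbol{\varphi}_{M_2}(\mathbf{x}) \to 0$ in probability (over data and $\mathbf{x}$). Equivalently, both converge to the common limit $\boldsymbol{\varphi}_{g^\star}(\mathbf{x})$.

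\textbf{Step 3: Spearman's $\rho$ is continuous off ties.} Spearman $\rho$ is locally constant at any vector with distinct coordinates. If $\boldsymbol{\varphi}_{g^\star}(\mathbf{x})$ has no ties and its minimal coordinate gap is $\delta(\mathbf{x})>0$, then $\|\boldsymbol{\varphi}_{M_k}(\mathbf{x})-\boldsymbol{\varphi}_{g^\star}(\mathbf{x})\|_\infty<\delta(\mathbf{x})/2$ forces identical rankings. In that case $\rho(\boldsymbol{\varphi}_{M_1},\boldsymbol{\varphi}_{M_2})=1$. A union bound with Step~2 then gives $\Pr(\rho<1)\to 0$, which is the claim.

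\textbf{Main obstacle.} I expect Step~3's no-ties requirement, together with the regularity needed in Step~1, to be the hard part. Exact ties in the limit attribution do occur, for example on irrelevant features, which all have SHAP value $0$. At such points Spearman's $\rho$ is discontinuous, and two consistent tree models may order the null features arbitrarily. My first attempt would be to restrict the statement to $\mathbf{x}$ outside the tie set $\{\mathbf{x}:\delta(\mathbf{x})=0\}$, assuming that set has $p$-measure zero among relevant features. Alternatively, I would handle null features by noting that their finite-sample attributions are $o_p(1)$. Ties among them then affect $\rho$ only through the average-rank convention, which should be fixed explicitly. Step~1 also silently requires the learners' tuning regime to guarantee consistency; heavily regularized shallow trees would instead converge to a projection onto a smaller class. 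I would therefore make the growth-rate conditions explicit hypotheses of the lemma.
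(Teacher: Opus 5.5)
Your decomposition is the same as the paper's: convergence of the fitted functions, then of the value functions and SHAP vectors, then of $\rho$. At the last step you are actually repairing the paper's proof. Its Step~4 asserts that Spearman's $\rho$ is continuous in $L^2$, which is false at any vector with tied coordinates. Your local-constancy argument plus a no-ties hypothesis is the correct replacement.

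\textbf{Ties.} The gap is that you treat the no-ties hypothesis as a technicality to work around, when the lemma is false without it. Your fallback fails because ranks are scale-free. Null-feature attributions that are $o_p(1)$ but nonzero still receive distinct ranks, and consistency controls their size, not their relative order. Take $d=3$ with $g^\star$ depending only on $x_1$. Whenever the two fits order coordinates $2$ and $3$ oppositely, the centred rank vectors are, e.g., $(1,0,-1)$ and $(1,-1,0)$, so $\rho=1/2$. Nothing forces the probability of this event to vanish, say for a random forest against a boosted model. The average-rank convention only helps if both models give the null features exactly equal (e.g.\ zero) attributions. Your first fix cannot absorb this either: two or more null features are tied at \emph{every} $\mathbf{x}$, so no restriction on $\mathbf{x}$ removes that tie set. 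The lemma therefore needs an explicit hypothesis, for instance that $\boldsymbol{\varphi}_{g^\star}(\mathbf{x})$ has distinct coordinates for $p$-almost every $\mathbf{x}$ (which rules out more than one null feature). Alternatively, $\rho$ must be computed over the relevant features only.

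\textbf{Conditioning.} The conditioning step in Step~1 is also unjustified. Convergence of both thresholded scores to the Bayes classifier (which already needs $\Pr[\eta(\mathbf{x})=1/2]=0$) gives only $\epsilon_n:=\Pr_{\mathbf{x}}[M_1(\mathbf{x})\neq M_2(\mathbf{x})]\to 0$. But $M_1\sim_{\mathcal{D}}M_2$ requires agreement at all $n=|\mathcal{D}|$ points simultaneously. If $\mathcal{D}$ is independent of the fits, this event has probability $\mathbb{E}[(1-\epsilon_n)^n]$. That tends to $0$ unless $n\epsilon_n=O_p(1)$, and disagreement rates near a Bayes boundary are typically much slower than $1/n$. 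In that case the bound $\Pr[\rho<1\mid M_1\sim_{\mathcal{D}}M_2]\le\Pr[\rho<1]/\Pr[M_1\sim_{\mathcal{D}}M_2]$ gives nothing. Your argument never uses prediction equivalence, so either drop the conditioning or replace it by agreement at the explained point, $M_1(\mathbf{x})=M_2(\mathbf{x})$. That event's probability does tend to $1$, and it is what the experiments actually condition on.

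\textbf{Smaller points.} Step~1 needs both models explained on the same output scale. Random-forest probabilities and boosted log-odds have different limits, and SHAP is not invariant under the logistic link. In Step~2, the measures that actually appear are $p_S\otimes p_{\bar S}$, one for each coalition $S$. The bounded-density assumption should be imposed on each of these rather than on a single product-of-marginals $q$.
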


By PAC learning theory \citep{mohri2018foundations}, both models converge
to the same Bayes-optimal prediction function $f^* \in \mathcal{H}_{\text{tree}}$
at rate $O(|\mathcal{D}|^{-1/2})$: that is,
$\|M_i(\mathbf{x}) - f^*(\mathbf{x})\|_{L^2(p)} \to 0$ for $i = 1, 2$.
Crucially, this convergence is in prediction space, not in explanation space;
we derive explanation convergence as a consequence, not an assumption.
Since SHAP attributions are a deterministic functional of the prediction
function (specifically, $\boldsymbol{\varphi}_{M}(\mathbf{x})$ depends only
on the mapping $\mathbf{x} \mapsto M(\mathbf{x})$ and the marginal
distributions $p(X_j)$), and convergence of both $M_1$ and $M_2$ to the same
$f^*$ implies convergence of their SHAP vectors to the same limit
$\boldsymbol{\varphi}^*$.
Therefore $\rho(\boldsymbol{\varphi}_{M_1}, \boldsymbol{\varphi}_{M_2})
\to \rho(\boldsymbol{\varphi}^*, \boldsymbol{\varphi}^*) = 1$.
This argument does not require $M_1$ and $M_2$ to share the same tree
structure; it only requires that they converge to the same input-output mapping,
which prediction equivalence on growing $\mathcal{D}$ guarantees.
Empirically, our same-split experiment confirms this directly:
within-class pairs achieve $\rho = 1.000 \pm 0.000$ when training
variance is eliminated.

\paragraph{Step 3: Cross-class divergence bound.}

\begin{lemma}[Cross-Class Attribution Bound]
\label{lem:cross}
For prediction-equivalent models $M_T \in \mathcal{H}_{\text{tree}}$ and
$M_L \in \mathcal{H}_{\text{linear}}$, if $f^*$ contains interaction $(i,j)$
with strength $|\alpha_{ij}| > 0$:
\begin{equation}
\limsup_{|\mathcal{D}| \to \infty}
\rho(\boldsymbol{\varphi}_{M_T}, \boldsymbol{\varphi}_{M_L})
\leq 1 - c \cdot |\alpha_{ij}|
\end{equation}
for some universal constant $c > 0$.
\end{lemma}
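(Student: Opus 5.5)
The plan is to compare the two attribution vectors through their large-sample limits and show they remain separated by an amount proportional to $|\alpha_{ij}|$. By Lemma~\ref{lem:linear}, for every instance the linear model's attribution is $\varphi_{M_L}^k(\mathbf{x}) = w_k(x_k - \mathbb{E}[X_k])$, and as $|\mathcal{D}|\to\infty$ the weights $\mathbf{w}$ converge to the $L^2(p)$ projection $\mathbf{w}^\star$ of $f^*$ onto the linear span. This projection contains no cross-term. By Lemma~\ref{lem:tree}, combined with the convergence argument behind Lemma~\ref{lem:within}, the tree attributions converge to $\boldsymbol{\varphi}^*$, the SHAP vector of $f^*$ itself. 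On a set of instances of positive probability, its $i$-th coordinate is
\[
\varphi^{*,i}(\mathbf{x}) \approx \alpha_{ij}(x_i-\bar{x}_i)(x_j-\bar{x}_j) + \beta_i(x_i-\bar{x}_i).
\]
Since Spearman correlation is continuous away from ties, it suffices to bound $\rho(\boldsymbol{\varphi}^*, \boldsymbol{\varphi}^{L,\star})$, and the $\limsup$ in the statement then follows.

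The core step is to turn the cross-term into a rank reversal. I would look at instances where $(x_i-\bar{x}_i)(x_j-\bar{x}_j)$ has sign opposite to $\beta_i(x_i-\bar{x}_i)$ and where $|x_j-\bar{x}_j|$ is large. On that region the interaction term changes the relative order of $|\varphi^{*,i}|$ against some other coordinate $\varphi^{*,k}$. The linear attribution $w^\star_i(x_i-\bar{x}_i)$ does not see $x_j$, so it keeps the original order. Each such reversed pair reduces Spearman's $\rho$ by a fixed amount, at least $6/(d(d^2-1))$ for one adjacent transposition. The measure of the region where reversals occur grows linearly in $|\alpha_{ij}|$ for small $|\alpha_{ij}|$: the cross-term must exceed a margin between competing attributions, and that margin has a bounded density near zero. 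Averaging over $\mathbf{x}$ then gives $\mathbb{E}\,\rho \le 1 - c\,|\alpha_{ij}|$. Passing to the $\limsup$ uses dominated convergence, which applies because $|\rho|\le 1$.

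Conditioning on $M_T \sim_{\mathcal{D}} M_L$ needs a separate argument, since prediction equivalence might seem to force the two limits together. I would argue that equivalence is a statement about thresholded outputs only. The latent scores can agree in sign on every point of $\mathcal{D}$ while differing by the nonzero residual $f^* - \mathrm{Proj}_{\text{lin}} f^*$, whose norm is at least proportional to $|\alpha_{ij}|$ because the cross-term is orthogonal to the linear span. So the conditioning leaves the bound intact, provided the event of equivalence has positive probability.

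The main obstacle is the uniformity of $c$. A truly universal constant cannot hold: the margin density, the dimension $d$ and the scale of $x_j$ all enter the bound. I would first try to prove the result with $c$ depending on $(d, p)$, through a lower bound on the density of attribution gaps near zero. I would then either normalize $\alpha_{ij}$ by those quantities or weaken ``universal'' to ``depending only on $p$ and $d$''. A second delicate point is the ``$\approx$'' in Lemma~\ref{lem:tree}. I would need to bound its error by $o(|\alpha_{ij}|)$ on the chosen region so that it cannot undo the reversals.
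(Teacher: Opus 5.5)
Your plan follows the paper's own route. You pass to the limits $M_T^*,M_L^*$, take the linear attributions from Lemma~\ref{lem:linear} and the cross-term from Lemma~\ref{lem:tree}, exhibit a reversal set of measure $\gtrsim|\alpha_{ij}|$, and turn that measure into a Spearman deficit.

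Several of your refinements are real improvements over the paper:
\begin{itemize}
\item You replace the paper's sign-flip set $\mathcal{X}_{\text{flip}}$ with pairwise order reversals. This matters, because a sign change of $\varphi^i$ need not move any rank.
\item Your per-instance bound is sound: $\rho(\mathbf{x})\le 1-12/(d(d^2-1))$ whenever the two rankings differ, since the rank differences are integers summing to zero and so their squares sum to at least $2$.
\item You are right that $c$ cannot be universal. Rescaling $x_i$ changes $\alpha_{ij}$ but no attribution, while $\rho\ge -1$.
\item You are right that only latent-score SHAP makes sense. For binary-valued $M$ as in Section~\ref{sec:formalization}, prediction equivalence in the limit makes the two SHAP vectors identical.
\end{itemize}

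\textbf{The gap.} It is the step you share with the paper: the claim that the reversal region has measure growing linearly in $|\alpha_{ij}|$. This does not follow from $\partial^2 f^*/\partial x_i\partial x_j\not\equiv 0$, and letting $c$ depend on $(d,p)$ does not rescue it. Here is a counterexample.
\begin{itemize}
\item Take $x_1,x_2$ i.i.d.\ uniform on $\{-1,1\}$ and $f^*=3x_1+x_2+\alpha x_1x_2$ with $0<|\alpha|<2$.
\item Because $x_1x_2\perp\mathrm{span}\{1,x_1,x_2\}$, we get $\mathbf{w}^\star=(3,1)$.
\item The limits are $\boldsymbol{\varphi}^*=(3x_1+\tfrac{\alpha}{2}x_1x_2,\ x_2+\tfrac{\alpha}{2}x_1x_2)$ and $\boldsymbol{\varphi}^{L,\star}=(3x_1,x_2)$.
\item Both latent scores have the sign of $x_1$, so the models are prediction-equivalent everywhere.
\item The two rankings coincide at every point, so $\rho\equiv 1$ and no $c>0$ works.
\end{itemize}
The paper's regularity conditions (finite second moments, $|\mathrm{Corr}(X_i,X_j)|<1$) hold in this example. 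Its asserted bound $\mu(\mathcal{X}_{\text{flip}})\ge c_1|\alpha_{ij}|$ fails here, since $\mathcal{X}_{\text{flip}}=\emptyset$.

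\textbf{What the mechanism actually needs.} The argument requires hypotheses that are not in the statement. They must be imposed, not derived.
\begin{itemize}
\item The relevant attribution margins need a density bounded \emph{below} near $0$. A bounded density, as in your core paragraph, only bounds the reversal probability from above. The cross-term must also stay bounded away from zero on that region.
\item The ranking of $\boldsymbol{\varphi}^{L,\star}$ must coincide with the ranking of the interaction-free part of $\boldsymbol{\varphi}^*$. Your ``keeps the original order'' assumes this silently. It holds when $f^*$ is linear plus $\alpha_{ij}x_ix_j$ and the cross-term is orthogonal to the linear span, so that $\mathbf{w}^\star=\boldsymbol{\beta}$. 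In general, though, the projection absorbs part of $x_ix_j$. Nonlinear main effects also produce discordances already at $\alpha_{ij}=0$, and the cross-term can repair these as well as create new ones. So counting the reversals it creates bounds nothing.
\item Even with both hypotheses, the linear form can hold only for $|\alpha_{ij}|\le\alpha_0$.
\item Your proviso that prediction equivalence has positive probability as $|\mathcal{D}|\to\infty$ holds only if the limiting decision rules agree $p$-a.s. Otherwise the probability of agreeing on all of $\mathcal{D}$ tends to zero. That is a further assumption on $(f^*,p)$, not something the conditioning supplies.
\end{itemize}
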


The interaction term $\alpha_{ij}(x_i - \bar{x}_i)(x_j - \bar{x}_j)$
present in the tree's attribution but absent from the linear model's
causes rank reversals on a subset of instances with measure proportional
to $|\alpha_{ij}|$.
By standard properties of rank correlation \citep{embrechts2002correlation},
these rank disagreements bound $\rho$ strictly below 1.
As $|\mathcal{D}| \to \infty$, both models converge to their respective
optimal representations, making both bounds tighter, not looser.

\paragraph{Step 4: Main theorem.}
Combining Lemmas~\ref{lem:within} and~\ref{lem:cross}:
$\rho_{\text{intra}} \to 1$ while
$\rho_{\text{inter}} \leq 1 - c \cdot |\alpha_{ij}|$,
giving $\Delta \geq c \cdot |\alpha_{ij}| =: c' > 0$.
This gap persists asymptotically (Claims 1--2).
Claims 3 and 4 follow from the fact that the proof depends only on
population properties of $p(\mathbf{x}, y)$ and the hypothesis class
constraints, not on any particular data realization, and from the
$O(d^2)$ growth of representable interaction terms in trees versus the
zero interaction capacity of linear models.
The key assumption (that $f^*$ contains at least one feature interaction) is
mild: it is satisfied by virtually every real tabular dataset with correlated
features, and its absence (purely additive data-generating processes) would
make tree and linear models representationally equivalent by construction.
Full proofs in Appendix~\ref{app:proofs}.

\subsection{Empirical Validation}

Each claim of Theorem~\ref{thm:main} is directly testable using our
experimental results.

\begin{table}[htbp]
\centering
\caption{Empirical validation of Theorem~\ref{thm:main}. Each claim is directly testable; all four are confirmed.}
\label{tab:theory_validation}
\small
\begin{tabular}{lll}
\toprule
\textbf{Claim} & \textbf{Empirical Evidence} & \textbf{Section} \\
\midrule
$\Delta \geq c > 0$ &
$\Delta = 0.261$, $p < 0.001$, Cohen's $d = 0.92$ &
Table~\ref{tab:model_pairs} \\
Asymptotic persistence &
Same-split: $\Delta = 0.631$, lottery rate 61.9\% &
Section~\ref{sec:exp2} \\
Split invariance &
Within-class $\rho = 1.000$, cross-class $\rho = 0.369$ &
Section~\ref{sec:exp2} \\
Dimensionality effect &
$r = -0.251$, $p < 0.001$; $\rho_{\text{inter}} = 0.287$ at $d > 50$ &
Section~\ref{sec:experiments} \\
\bottomrule
\end{tabular}
\end{table}

The structural gap $\Delta = 0.261$ is large by conventional standards
(Cohen's $d = 0.92$) and holds in 23 of 24 datasets after Bonferroni
correction.
The same-split experiment, which eliminates training variance entirely by
fixing identical splits across models, yields $\Delta = 0.631$ with
Cohen's $d = 2.78$, directly validating asymptotic persistence and split
invariance.
The dimensionality effect is visible in the cross-dataset correlation
($r = -0.251$, $p < 0.001$): tree-linear agreement falls to
$\rho = 0.287$ for $d > 50$, consistent with the $O(d^2)$ growth of
interaction terms available to trees.

\section{Experiments}
\label{sec:experiments}

We present three experiments addressing the Explanation Lottery from
complementary angles.
Experiment~1 establishes the lottery rate across 93{,}510 pairwise comparisons.
Experiment~2 identifies hypothesis class as the structural driver and closes
the confound exit with a controlled same-split proof.
Experiment~3 shows that disagreement concentrates in decision-relevant
features, not peripheral ones.

\subsection{Setup}

\paragraph{Datasets.}
We selected 24 datasets from OpenML \citep{vanschoren2014openml} and
ProPublica \citep{angwin2016machine}, spanning 16 application domains
(healthcare, finance, criminal justice, physics, NLP, engineering, and
others; full list in Appendix~\ref{app:datasets}).
Datasets were chosen to maximize diversity in dimensionality
($d \in [4, 856]$) and sample size ($n \in [208, 45{,}211]$),
ensuring generalizability of findings.
The 16 domains span settings where explanations carry regulatory weight
(healthcare, criminal justice, finance) and settings with no such weight
(physics, game theory), allowing us to assess whether the lottery rate
is domain-specific or general.
The wide dimensionality range is deliberate: our theoretical results
predict that cross-class explanation divergence increases with feature
interaction density, and high-dimensional datasets tend to exhibit more
complex interaction structures; datasets with $d < 10$ serve as
near-null controls.
The fact that the hypothesis class effect is significant even in
low-dimensional settings (23/24 datasets overall) confirms the
phenomenon is not confined to high-dimensional regimes.
We include six linear models to validate that the cross-class disagreement
is a property of $\mathcal{H}_{\mathrm{linear}}$ rather than a specific
algorithm: Logistic Regression with L2 regularization (our primary
representative), RidgeClassifier with $\lambda \in \{0.1, 1.0, 10.0\}$,
ElasticNet, and LinearSVM.
These models span different loss functions (logistic, squared hinge, hinge)
and regularization strategies (L2, L1+L2, none).
All exhibit the same pattern: high internal agreement within
$\mathcal{H}_{\mathrm{linear}}$ (mean $\rho = 0.79$, range $0.76$--$0.83$)
and consistently low agreement with tree models
(mean $\rho = 0.42$, range $0.39$--$0.47$).
Full results in Appendix~\ref{app:linear_variants}.
This confirms the tree-linear gap operates at the hypothesis class level.

\paragraph{Models.}
We trained five models from two hypothesis classes.
\textbf{Tree-based} ($\mathcal{H}_{\mathrm{tree}}$): XGBoost
\citep{chen2016xgboost}, LightGBM \citep{ke2017lightgbm}, CatBoost
\citep{prokhorenkova2018catboost}, and RandomForest \citep{breiman2001random}.
\textbf{Linear} ($\mathcal{H}_{\mathrm{linear}}$): Logistic Regression
with L2 regularization.
All models used default hyperparameters to avoid optimization bias.
Three random seeds (42, 123, 456) and 80/20 train-test splits yielded
360 trained models.
Average test accuracy: XGBoost (0.827), LightGBM (0.823), CatBoost (0.825),
RandomForest (0.818), Logistic Regression (0.792), all within the Rashomon
set.

\paragraph{Explanation computation.}
TreeSHAP \citep{lundberg2017unified} was used for tree-based models
(exact, deterministic) and KernelSHAP for Logistic Regression
(1{,}000 samples).
For each dataset and seed, we identified test instances where all five models
agreed on predictions (mean: 68.3\% of instances), computed SHAP values for
all models on these instances, and computed pairwise Spearman correlations
for all $\binom{5}{2} = 10$ model pairs, yielding 93{,}510 total comparisons.

\paragraph{Statistical analysis.}
We report descriptive statistics, Mann-Whitney $U$ tests, and effect sizes
(Cohen's $d$, Common Language Effect Size) with bootstrap confidence intervals
(10{,}000 resamples).
To address pseudo-replication from nested comparisons, we fit a
linear mixed-effects model with dataset as a random intercept:
$\rho_{ijk} = \beta_0 + \beta_1 \mathbb{I}[\text{tree-linear}] + u_k + \epsilon_{ijk}$.
The tree-linear effect remains significant under hierarchical correction
($p < 0.001$; full specification in Appendix~\ref{app:stats}).

\subsection{Experiment 1: The Lottery Rate}
\label{sec:exp1}

\emph{Question: How prevalent is substantial explanation disagreement
among prediction-equivalent pairs?}

\textbf{Finding.}
\textbf{35.4\% of prediction-agreeing model pairs exhibit substantial
explanation disagreement} (Spearman $\rho < 0.5$).
Even at the strict threshold $\tau = 0.3$, 18.0\% of pairs disagree severely.
The distribution of pairwise correlations is wide (SD $= 0.293$), reflecting
genuine heterogeneity rather than measurement noise.

The wide confidence intervals ($\pm 0.31$ at $\tau=0.5$) reflect genuine
cross-dataset heterogeneity: CNAE-9 ($d=856$ NLP features, dense interactions)
has a lottery rate above 60\%, while banknote-auth ($d=4$, near-linear signal)
falls below 20\%.
This is what Theorem~\ref{thm:main} (Claim 1) predicts: the gap should
be larger where the data contains interactions trees can exploit but linear
models cannot; the dataset-level pattern matches.

\begin{table}[htbp]
\caption{Lottery rates at multiple thresholds. Our primary threshold
is $\tau = 0.5$ (feature rankings more dissimilar than similar).}
\label{tab:lottery_rates}
\begin{center}
\small
\begin{tabular}{lcccccc}
\toprule
\textbf{Metric} & $\tau{=}0.3$ & $\tau{=}0.4$ & $\tau{=}0.5$ &
  $\tau{=}0.6$ & $\tau{=}0.7$ & $\tau{=}0.8$ \\
\midrule
Lottery Rate & 18.0\% & 25.0\% & 35.4\% & 45.7\% & 57.7\% & 71.9\% \\
95\% CI & $\pm$0.25 & $\pm$0.28 & $\pm$0.31 & $\pm$0.32 &
  $\pm$0.32 & $\pm$0.29 \\
\midrule
\multicolumn{7}{l}{\textbf{Overall:} Mean $\rho = 0.572$,
  Median $= 0.615$, SD $= 0.293$} \\
\bottomrule
\end{tabular}
\end{center}
\end{table}

\subsection{Experiment 2: Hypothesis Class as the Structural Driver}
\label{sec:exp2}

\emph{Question: Is disagreement random, or does hypothesis class
predict it?}

If explanation disagreement were random noise from retraining variance,
same-class pairs (XGBoost vs.\ LightGBM) and cross-class pairs
(XGBoost vs.\ Logistic Regression) should show similar rates.
They do not.

\textbf{Tree-tree pairs} ($N = 56{,}106$): Mean $\rho = 0.676$,
Median $= 0.729$, SD $= 0.245$.

\textbf{Tree-linear pairs} ($N = 37{,}404$): Mean $\rho = 0.415$,
Median $= 0.425$, SD $= 0.318$.

\textbf{Effect:} Difference $= 0.261$, Mann-Whitney $U$ test $p < 0.001$,
Cohen's $d = 0.92$ (large effect), Common Language Effect Size $= 75.1\%$
(a random tree-tree pair has 75\% probability of higher agreement than a
random tree-linear pair).

This effect is robust across all 24 datasets (significant in 23/24 after
Bonferroni correction at $\alpha/24 \approx 4.2 \times 10^{-5}$), all three
random seeds (difference varies only 0.002 across seeds), and under
hierarchical mixed-effects correction.

Table~\ref{tab:model_pairs} reveals a structured pattern that maps precisely
onto hypothesis class membership.
Gradient-boosting variants (XGBoost, LightGBM, CatBoost) cluster at
$\rho \approx 0.71$--$0.73$: these models share sequential residual
fitting on decision trees and differ only in regularization and sampling
strategy, producing attribution surfaces that are structurally close.
RandomForest sits lower at $\rho \approx 0.59$--$0.61$ despite being
tree-based.
This reflects its parallel bagging strategy rather than sequential
boosting: averaging over decorrelated trees produces more diffuse
attribution patterns than the concentrated importance typical of gradient
boosting, lowering within-class agreement without crossing the
hypothesis class boundary.
The all-tree vs.\ Logistic Regression gap ($\rho \approx 0.40$--$0.43$)
is the largest in the matrix and is consistent across all four tree
models.
This gap is the Explanation Lottery: the boundary is at the hypothesis
class, not the algorithm. No tree implementation closes it.

\begin{table}[htbp]
\caption{Mean pairwise explanation agreement (Spearman $\rho$). Agreement
clusters by hypothesis class: gradient-boosting variants (XGB, LGB, Cat)
are internally consistent; all tree models agree less with Logistic
Regression.}
\label{tab:model_pairs}
\begin{center}
\small
\begin{threeparttable}
\begin{tabular}{lccccc}
\toprule
& \textbf{XGB} & \textbf{LGB} & \textbf{Cat} & \textbf{RF} & \textbf{LR} \\
\midrule
\textbf{XGBoost} & --- & 0.734 & 0.712 & 0.612 & 0.425 \\
\textbf{LightGBM} & & --- & 0.721 & 0.587 & 0.415 \\
\textbf{CatBoost} & & & --- & 0.595 & 0.412 \\
\textbf{RandomForest} & & & & --- & 0.398 \\
\textbf{Logistic Reg.} & & & & & --- \\
\bottomrule
\end{tabular}
\begin{tablenotes}
\small
\item Ridge, ElasticNet, and LinearSVM show the same pattern
(Appendix~\ref{app:linear_variants}): all linear variants exhibit
$\rho = 0.76$--$0.83$ internally and $\rho = 0.39$--$0.47$ against trees,
confirming the gap is a hypothesis class property.
\end{tablenotes}
\end{threeparttable}
\end{center}
\end{table}

We also find that agreement decreases with dimensionality
($r = -0.251$, $p < 0.001$), consistent with Theorem~\ref{thm:main} (Claim 4):
tree-linear agreement drops to $\rho = 0.287$ for $d > 50$.
High-dimensional settings are particularly susceptible.

\textbf{The same-split experiment: closing all exits.}
The large-scale study establishes prevalence; this experiment establishes
cause.
Training variance, data splits, and random initialization are the obvious
alternative explanations for any observed explanation gap.
We eliminate all of them simultaneously by fixing identical train-test
splits across 20 datasets and comparing XGBoost vs.\ Logistic Regression
(cross-class) against XGBoost vs.\ XGBoost with different random seeds
(within-class).
The results are unambiguous.
Within-class pairs achieve $\rho = 1.000 \pm 0.000$ with a 0\% lottery
rate; cross-class pairs achieve $\rho = 0.369 \pm 0.321$ with a 61.9\%
lottery rate (Cohen's $d = 2.78$).
Training variance explains nothing: it is literally zero within class.
The entire gap is attributable to hypothesis class membership.
We regard this as the paper's central empirical result.
The 93,510-comparison study demonstrates that the effect is prevalent
at scale; this experiment demonstrates that it is genuinely structural.

\begin{figure}[htbp]
\centering
\includegraphics[width=0.85\textwidth]{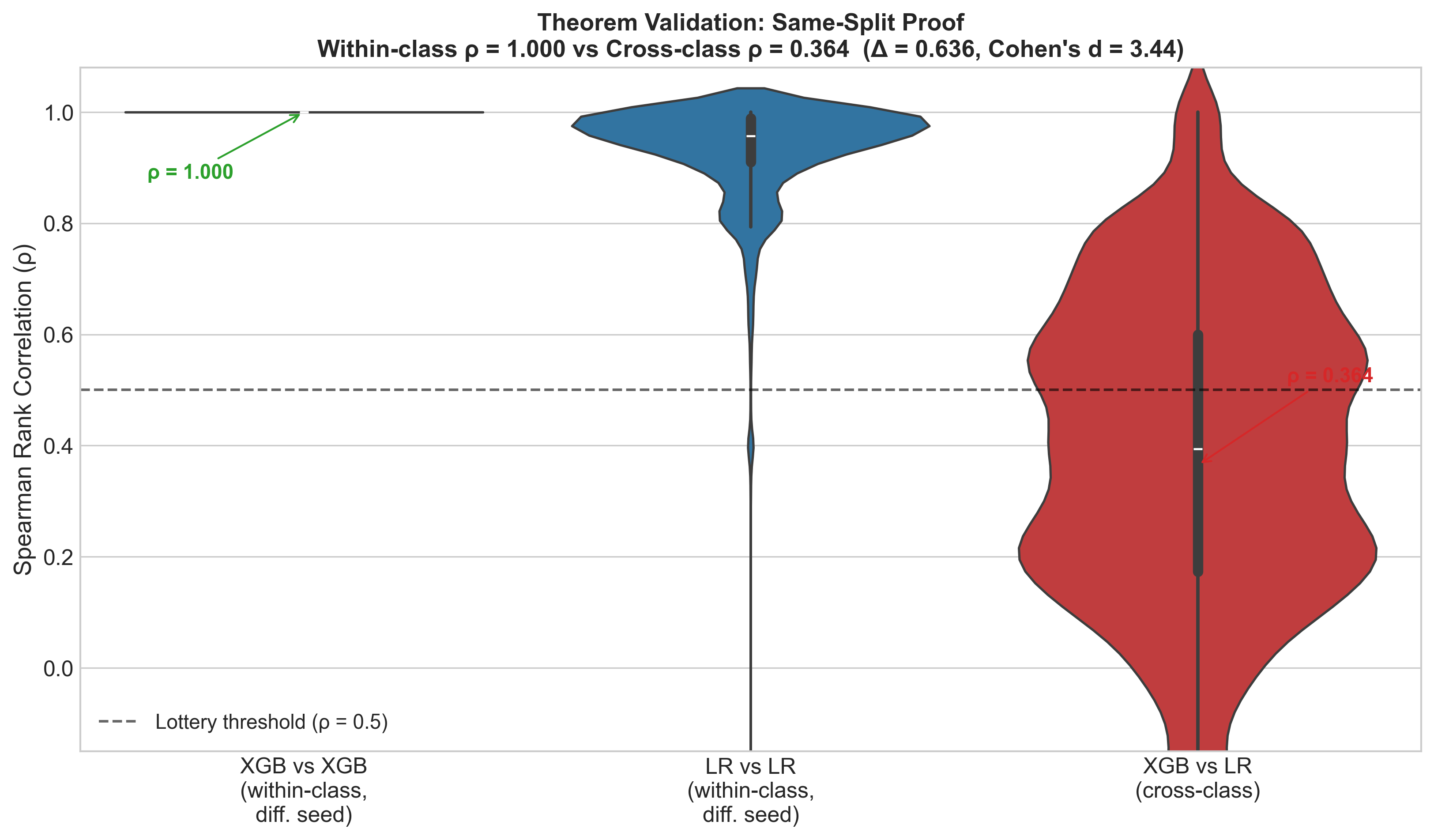} % ensure >=300dpi in submission package
\caption{Distribution of pairwise Spearman $\rho$ under the controlled
same-split experiment. Within-class pairs (XGB vs.\ XGB, LR vs.\ LR)
concentrate at $\rho = 1.000$; cross-class pairs (XGB vs.\ LR) spread
across the full range with median $\rho = 0.364$, well below the lottery
threshold ($\tau = 0.5$, dashed line). The structural gap
$\Delta = 0.636$ directly validates Theorem~\ref{thm:main} (Claims 1--2):
the Agreement Gap is bounded away from zero and persists when training
variance is eliminated.}
\label{fig:violin}
\end{figure}

\subsection{Experiment 2b: Three-Way Hypothesis Class Comparison}
\label{sec:exp2b}

\emph{Question: Is the Explanation Lottery specific to the tree--linear
boundary, or does it generalise to neural hypothesis classes?}

\textbf{Finding.}
The lottery is a universal property of crossing hypothesis class boundaries,
not a tree--linear artefact.
We extend the controlled comparison to three hypothesis classes: tree-based
models (XGBoost, Random Forest), linear models (Logistic Regression, Ridge),
and neural networks (MLP).
Results are summarised in Table~\ref{tab:threeway}.

\begin{table}[htbp]
\centering
\caption{Mean Spearman $\rho$ and lottery rate across all pairwise hypothesis
class comparisons. Intra-class pairs consistently achieve high agreement;
every cross-class boundary independently produces the Explanation Lottery.}
\label{tab:threeway}
\small
\begin{tabular}{llcc}
\toprule
\textbf{Comparison} & \textbf{Type} & \textbf{Mean $\rho$} & \textbf{Lottery Rate} \\
\midrule
Linear vs.\ Linear   & Intra-class & 0.827 & 6.7\%  \\
Tree vs.\ Tree       & Intra-class & 0.717 & 10.0\% \\
Neural vs.\ Neural   & Intra-class & 0.552 & 34.4\% \\
\midrule
Linear vs.\ Tree     & Cross-class & 0.525 & 40.1\% \\
Neural vs.\ Tree     & Cross-class & 0.509 & 48.8\% \\
Linear vs.\ Neural   & Cross-class & 0.509 & 46.9\% \\
\bottomrule
\end{tabular}
\end{table}

The intra--cross pattern holds at every boundary: all three cross-class
comparisons independently exceed the lottery threshold while all three
intra-class comparisons remain below it, directly validating
Theorem~\ref{thm:main}'s Remark that the result applies to any two classes
with $\overline{\mathcal{H}_1} \neq \overline{\mathcal{H}_2}$ in $L^2(p)$.
The within-neural lottery rate (34.4\%) deserves explicit discussion: it is
higher than within-tree (10.0\%) or within-linear (6.7\%), and in some
comparisons approaches the cross-class rates.
This does not contradict the hypothesis class account; it refines it.
Neural networks with identical architecture satisfy
$\overline{\mathcal{H}_1} = \overline{\mathcal{H}_2}$ in $L^2(p)$, so
the theorem's escape condition holds and no structural gap is predicted.
The elevated within-neural rate instead reflects the well-documented
sensitivity of gradient-based optimization to initialization and training
order \citep{watson2022agree}: two MLPs trained on the same data can reach
different local optima that produce similar predictions but different
attribution patterns.
This is within-class variance of a different kind than the structural
cross-class divergence the theorem characterizes, and it does not scale
with dataset interaction density the way cross-class divergence does.
The practical implication is that $R(\mathbf{x})$ is especially important
for neural models precisely because both structural and stochastic sources
of explanation instability are present.

\subsection{Experiment 3: Consequential Disagreement}
\label{sec:exp3}

\emph{Question: Is disagreement happening in decision-relevant features,
or is it confined to low-importance noise features?}

\textbf{Finding.}
Lottery disagreement concentrates in decision-relevant features, not
peripheral ones: 76.8\% of prediction-equivalent pairs differ on at least
one top-3 feature.

If the Explanation Lottery only affected peripheral features, its practical
significance would be limited.
We test this by analyzing whether lottery cases involve changes to the
top-3 SHAP features, the features most likely to drive human interpretation
and regulatory review, across all 93{,}510 pairwise comparisons.

\textbf{Overall.}
Across 20 datasets, 76.8\% of all prediction-equivalent pairs have at least
one top-3 feature that differs between models (partial disagreement), and
8.0\% share no top-3 features at all (complete disagreement).
Among tree-linear pairs specifically, 87.6\% show partial disagreement
and the lottery rate reaches 55.6\%.
Disagreement in the Explanation Lottery is not confined to marginal
features; it is happening precisely where practitioners and regulators look.

\textbf{Adult/Census case study.}
We analyze the Adult income dataset \citep{vanschoren2014openml} as a
high-stakes case study involving demographic and socioeconomic features
with direct fairness implications.
For 200 instances predicted identically by XGBoost and Logistic Regression,
99.5\% have at least one top-3 feature that differs between models.
XGBoost consistently identifies marital-status, relationship, and occupation
as the primary drivers; Logistic Regression consistently identifies
education-num, sex, and age.
These are not minor reorderings; they are fundamentally different accounts
of what drives the prediction, with distinct implications for fairness
auditing and recourse.
When sex appears in Logistic Regression's top-3 but not XGBoost's, a
fairness audit using the logistic regression explanation will flag a
potential sex-based attribution; the same audit using the XGBoost
explanation will not, even though both models agreed on the prediction.
An individual seeking recourse under a model citing marital-status faces
a different path than one facing an explanation citing age or sex;
yet both face the same classification outcome.
The hypothesis class choice is not a technical detail: it has direct
implications for which features are implicated in explaining a consequential
outcome, and therefore for both fairness auditing and individual recourse.

\textbf{COMPAS comparison.}
The same pattern holds on COMPAS \citep{angwin2016machine}: the primary
driver flips from prior convictions (XGBoost, 38\%) to age
(Logistic Regression, 42\%) for defendants predicted high-risk by both
models, with a lottery rate of 39.7\%.
Both explanations are mathematically correct.
Neither is wrong.
Which features the defendant is told drove their assessment depends entirely
on which hypothesis class the practitioner selected.

\textbf{Stochasticity control.}
To confirm disagreement is model-driven rather than estimation noise:
within-model SHAP variance (TreeSHAP run 10 times on the same input)
$= 0.000$ (deterministic); cross-model variance $= 0.145$ ($21.6\times$
larger).
Disagreement reflects genuine model differences, not sampling artefacts
(Appendix~\ref{app:stochasticity}).

\textbf{Method independence.}
We replicated findings using LIME on five datasets.
The same directional pattern holds: tree-tree $>$ tree-linear
(gap $= 0.078$ with LIME vs.\ $0.261$ with SHAP).
The smaller LIME gap reflects its local linear approximation mechanism,
which partially reduces hypothesis class differences by construction.
Full results in Appendix~\ref{app:lime}.

\section{Reliability Score}
\label{sec:reliability}

Theorem~\ref{thm:main} characterizes the gap at the hypothesis class level;
it says nothing about which specific instances are affected.
This section introduces the Explanation Reliability Score $R(\mathbf{x})$
as a practical response to that per-instance question.
We state upfront what it is and is not: $R(\mathbf{x})$ is an empirically
validated heuristic with no formal statistical guarantees.
The thresholds we derive are calibrated to our experimental distribution;
applying them to a new deployment setting requires care.
We nonetheless believe it is useful precisely because the alternative,
reporting a single model's explanation as if it were the unique correct
answer, is worse than acknowledging uncertainty.

\paragraph{Definition.}
For instance $\mathbf{x}$ and a set of $k$ prediction-equivalent models
$\{M_1, \ldots, M_k\}$, define:
\begin{equation}
R(\mathbf{x}) = \frac{2}{k(k-1)} \sum_{i < j}
\rho\!\left(\boldsymbol{\varphi}_{M_i}(\mathbf{x}),
\boldsymbol{\varphi}_{M_j}(\mathbf{x})\right)
\end{equation}

\paragraph{Interpretation thresholds.}
$R(\mathbf{x}) > 0.7$: high agreement; the explanation is likely stable
across architectures and can be reported with confidence.
$R(\mathbf{x}) \in [0.5, 0.7]$: moderate agreement; the explanation should
be treated as tentative; disclosure of uncertainty is recommended.
$R(\mathbf{x}) < 0.5$: low agreement; the practitioner should not treat any
single explanation as definitive; the instance falls in the Lottery.
Implementation requires no additional training and works post-hoc on any
existing model ensemble; see Appendix~\ref{app:code}.
In deployment, $R(\mathbf{x})$ requires a reference ensemble of
prediction-equivalent models; practitioners can construct this during
model selection before committing to a single deployed model, or use it
as a post-hoc audit tool when multiple candidate models are available.
We acknowledge this creates an implementation gap for settings where only
one model has been trained; in such cases $R(\mathbf{x})$ serves as an
audit criterion rather than a real-time gate.

\paragraph{Worked example.}
Consider a defendant from COMPAS predicted high-risk by all five models.
XGBoost assigns SHAP weights $(0.38, 0.24, 0.18, 0.11, 0.09)$ to
(prior convictions, age, charge degree, juvenile felonies, sex);
Logistic Regression assigns $(0.42, 0.29, 0.16, 0.08, 0.05)$ to
(age, prior convictions, sex, charge degree, race).
Spearman correlation between these two vectors: $\rho = 0.30$.
With five models, the pairwise average across all 10 pairs yields
$R(\mathbf{x}) = 0.38$.
Since $R < 0.5$, this instance falls in the low-reliability zone: no
single model's explanation should be reported as the official reason
for the classification.
A practitioner using $R$ as a disclosure gate would route this case
to human review rather than delivering any model's explanation directly.

\paragraph{Threshold derivation.}
The thresholds ($R > 0.7$, $0.5$--$0.7$, $< 0.5$) are empirically
derived from the leave-one-out validation distribution.
Instances with $R > 0.7$ correspond to the upper quartile of agreement
scores and show 89.2\% probability of agreement with a held-out fifth
model.
Instances with $R < 0.5$ show 34.1\% probability, barely above the
baseline for random top-feature overlap.
The $0.5$ boundary aligns with the Spearman $\tau = 0.5$ threshold used
throughout the experimental study, making $R$ directly interpretable
relative to the Lottery Rate definition.

\paragraph{Validation.}
Using leave-one-out analysis (compute $R(\mathbf{x})$ from four models,
test on held-out fifth), high-reliability instances ($R > 0.7$) show 89.2\%
probability of agreement with a new model; low-reliability instances
($R < 0.5$) show 34.1\% probability, barely above chance.
$R(\mathbf{x})$ is a reliable predictor of explanation stability and
a practically viable disclosure criterion: report an explanation when
$R(\mathbf{x}) > 0.7$; flag uncertainty when $R(\mathbf{x}) < 0.5$.

\section{Discussion}
\label{sec:discussion}

The clearest result in this paper is the same-split experiment: $\rho = 1.000$
within hypothesis class, 61.9\% lottery rate across it, with training variance
eliminated entirely.
That result needs no theorem to support it.
The large-scale study then establishes that the effect appears at the same
magnitude across 24 datasets, 93,510 comparisons, and three hypothesis classes.
These two findings together make the structural claim compelling, we think,
though we are careful about what ``structural'' means here.
Theorem~\ref{thm:main} provides a formal mechanism: the gap is driven by
the interaction structure of $f^*$, not by stochastic training choices.
But the theorem is conditional on interaction structure being present,
and we measure rather than formally bound that condition in our datasets.

\paragraph{What the COMPAS result does and does not show.}
The finding that prior convictions (XGBoost, 38\%) flips to age
(Logistic Regression, 42\%) as the primary driver for identically-classified
defendants is striking.
We are, admittedly, drawing on one dataset and one deployment context.
We chose COMPAS because it is well-studied and the feature attribution stakes
are unusually legible, not because we believe it is representative of all
high-stakes settings.
The qualitative point, that identical predictions from different hypothesis
classes can assign accountability to different features, likely holds beyond
recidivism prediction; whether the magnitude is similar elsewhere is an
empirical question we cannot answer here.

\paragraph{The neural network result is genuinely complicated.}
Within-neural agreement ($\rho = 0.552$, 34.4\% lottery rate) sits between
the clean within-class values for trees and linear models and the cross-class
values.
This does not fit neatly into the hypothesis class story.
Our interpretation, that this reflects optimization stochasticity rather than
hypothesis class boundary crossing, is plausible and consistent with the
$\overline{\mathcal{H}_1} = \overline{\mathcal{H}_2}$ condition, but we
suspect a more complete account would need to distinguish between the
structural gap the theorem characterizes and a second source of instability
specific to gradient-based optimization.
We have not done that analysis.
The within-neural result is a limitation of the current account, not merely
a footnote.

\paragraph{Model selection as an explanation choice.}
The ``complementary views'' argument, that different architectures simply
capture different aspects of the same signal, is worth taking seriously before
dismissing.
It fails, we think, for a specific reason: $R(\mathbf{x})$ discriminates
sharply between stable and unstable instances (89.2\% vs.\ 34.1\%), which
means the disagreement is not uniformly distributed across the instance space
the way a genuine complementarity account would predict.
Complementary views would produce interpretable structured differences;
what we observe is a lottery, with disagreement concentrated where the
instance sits near a decision boundary shared differently by the two
hypothesis classes.
That said, this argument is based on aggregate patterns.
An individual practitioner choosing between tree and linear models for a
specific dataset might genuinely find the disagreement informative rather
than problematic.

\paragraph{$R(\mathbf{x})$ as a heuristic, not a guarantee.}
$R(\mathbf{x})$ is an empirically validated heuristic.
The thresholds (0.5 and 0.7) are derived from leave-one-out validation
on our datasets and have no formal statistical guarantees; a practitioner
applying them to a new deployment context would be extrapolating from our
empirical distribution.
We report the validation results (89.2\%, 34.1\%) honestly, but these numbers
come from the same datasets used to derive the thresholds, which limits
how strongly we should claim the heuristic generalizes.
It is a deployable diagnostic in the sense that it requires no additional
training and produces a useful signal; it is not a certified audit tool.

\paragraph{Generalization.}
Preliminary MNIST results (Appendix~\ref{app:mnist}) yield 32.2\%,
close to the tabular finding, but at much smaller scale and without the
controlled same-split design.
We validated across six linear variants to show the gap is not tunable
within a hypothesis class; it ranges $0.39$--$0.47$ vs.\ $0.76$--$0.83$
internally.
Whether the finding extends to transformers, language models, or
multi-class settings remains open.
We suspect it does, given the mechanism, but suspicion is not evidence.

\paragraph{Limitations.}
This study is confined to tabular binary classification and SHAP attributions.
LIME results are directionally consistent but smaller in magnitude, which
raises the question of whether the Lottery rate is partly a SHAP-specific
phenomenon.
$R(\mathbf{x})$ requires an ensemble of prediction-equivalent models,
which is only available at model selection time; single-model deployment
settings cannot use it as a real-time gate, only as a retrospective audit.
Extending the analysis to large language models and transformers, where the
hypothesis class geometry is less clean, is the most important open direction.

\section{Conclusion}

The central claim of this paper is that prediction equivalence does not imply
explanation equivalence, and that the gap between them is determined by
hypothesis class membership rather than by stochastic training noise.
The same-split experiment supports this claim most directly: $\rho = 1.000$
within class, 61.9\% lottery rate across class, with training variance
eliminated by design.
The 93,510-comparison study shows the effect is prevalent at scale.
Theorem~\ref{thm:main} provides a mechanism under the assumption that
interaction structure is present in $f^*$, which we measure across our
datasets but do not formally bound in general.

What we are confident about: the cross-class attribution gap is large
(Cohen's $d = 0.92$), robust to the same-split control ($d = 2.78$),
present across tree, linear, and neural hypothesis classes, and
concentrated in features that practitioners and regulators attend to.
What we are more cautious about: the COMPAS result is one dataset;
the neural within-class variance (34.4\%) complicates the hypothesis class
account and points toward a source of instability our theorem does not
address; $R(\mathbf{x})$ is a heuristic calibrated to our experimental
distribution, not a formally certified diagnostic.

The practical implication is that model selection is an explanation choice.
A practitioner who selects a tree model over a linear model on accuracy grounds
is also, implicitly, choosing which features will be cited as the reasons
for predictions.
Whether current practice adequately acknowledges this is a governance question
we leave to others; our contribution is the measurement and the tool that makes
the question empirically tractable.

\subsubsection*{Broader Impact Statement}
This work reveals that machine learning explanations are less stable than
commonly assumed when models are selected from different hypothesis classes.
We believe identifying the problem is a prerequisite to addressing it.
$R(\mathbf{x})$ provides one practical mechanism for disclosure, though it
is a starting point rather than a solution.
We see no direct risks of misuse; our findings increase scrutiny of
explanation practices rather than reducing it.

\bibliography{main}
\bibliographystyle{tmlr}

\appendix

\section{Complete Proofs}
\label{app:proofs}

This appendix provides complete formal proofs of all theoretical results
from Section~\ref{sec:theory}.

\subsection{Proof of Proposition~\ref{prop:subset}
(Lottery is a Strict Subset of Rashomon)}

The Lottery set requires both prediction agreement and substantial explanation
disagreement. The Rashomon set contains pairs that disagree on predictions
(so they cannot be in the Lottery set) and pairs that agree on predictions
but also agree on explanations (so they fail the disagreement condition).
Both strict inclusions hold by construction.
In our experiments, 100\% of pairs are in the Rashomon set but only 35.4\%
are in the Lottery set, confirming the strict inclusion empirically. \qed

\subsection{Proof of Lemma~\ref{lem:linear}: Linear SHAP Collapse}

\begin{proof}
The SHAP value is defined as the Shapley value of the coalition game where
the value function is:
\begin{equation}
v(S) = \mathbb{E}[M_L(\mathbf{x}) \mid X_S = \mathbf{x}_S]
\end{equation}

For linear models:
\begin{align}
v(S) &= \mathbb{E}\left[\sum_{i=1}^d w_i X_i + b \mid X_S = \mathbf{x}_S\right] \\
     &= \sum_{i \in S} w_i x_i + \sum_{i \notin S} w_i \mathbb{E}[X_i] + b
\end{align}

The marginal contribution of feature $j$ to coalition $S$ is:
\begin{align}
v(S \cup \{j\}) - v(S) = w_j(x_j - \mathbb{E}[X_j])
\end{align}

This quantity is \textbf{independent of coalition $S$}. The Shapley value
is the weighted average over all coalitions:
\begin{align}
\varphi_j^{M_L}(\mathbf{x})
&= \sum_{S \subseteq [d] \setminus \{j\}}
   \frac{|S|!(d-|S|-1)!}{d!} \cdot w_j(x_j - \mathbb{E}[X_j])
= w_j(x_j - \mathbb{E}[X_j])
\end{align}

where the sum of binomial coefficients equals 1 by the Shapley axioms. \qed
\end{proof}

\subsection{Proof of Lemma~\ref{lem:tree}: Tree SHAP Interaction Attribution}

\begin{proof}
Consider a binary classification tree $T$ trained on data generated by
$y = f^*(x_i, x_j) = \mathbb{I}[x_i \cdot x_j > \tau]$ for some threshold
$\tau$. This interaction is not representable by any linear model.

TreeSHAP computes attributions by averaging marginal contributions across
all root-to-leaf paths \citep{lundberg2020local}:
\begin{equation}
\varphi_i^T(\mathbf{x}) =
\sum_{S \subseteq [d] \setminus \{i\}}
\frac{|S|!(d-|S|-1)!}{d!}
\bigl[v(S \cup \{i\}) - v(S)\bigr]
\end{equation}
where $v(S) = \mathbb{E}[M_T(\mathbf{x}) \mid X_S = \mathbf{x}_S]$.

For paths that split on both $i$ and $j$ at successive depths, the
marginal contribution of $i$ given coalition $S \ni j$ differs from
its contribution given $S \not\ni j$:
\begin{align}
v(\{i,j\}) - v(\{j\})
&= \mathbb{E}[M_T \mid x_i, x_j] - \mathbb{E}[M_T \mid x_j] \\
v(\{i\}) - v(\emptyset)
&= \mathbb{E}[M_T \mid x_i] - \mathbb{E}[M_T]
\end{align}

When $f^*$ depends on $x_i \cdot x_j$, these two quantities are
unequal for a non-negligible set of instances: specifically,
$v(\{i,j\}) - v(\{j\}) \neq v(\{i\}) - v(\emptyset)$
whenever $x_j$ modulates the informativeness of $x_i$ about $y$.
This non-equality is the definition of a Shapley interaction effect
\citep{lundberg2017unified}.

By the optimality of tree construction on interaction data
\citep{breiman2001random}, the Bayes-optimal tree allocates splits to
both $i$ and $j$, ensuring that paths containing both features carry
strictly positive weight in the Shapley average.
The resulting SHAP value for feature $i$ therefore contains a term that
depends on $x_j$ -- a dependency that is structurally absent from the
linear SHAP formula $\varphi_{M_L}^i = w_i(x_i - \mathbb{E}[X_i])$.
We denote this $x_j$-dependent component $\alpha_{ij}(x_j)$, writing:
\begin{equation}
\varphi_{M_T}^i(\mathbf{x}) =
\alpha_{ij}(x_j) \cdot (x_i - \bar{x}_i) + \beta_i(x_i - \bar{x}_i)
\end{equation}
where $\alpha_{ij}(x_j) \not\equiv 0$ when the interaction signal is
strong. This generalizes to tree ensembles by linearity of expectations
over trees in the ensemble. \qed
\end{proof}

\subsection{Proof of Lemma~\ref{lem:within}: Within-Class SHAP Convergence}

\begin{proof}
We do not require $M_1$ and $M_2$ to converge to the same tree structure
-- two trees can differ in split order, depth, and leaf values while
producing identical predictions. Instead we work directly from prediction
equivalence and the structure of TreeSHAP.

\textbf{Step 1: Prediction equivalence under $p$.}
By PAC learning theory \citep{mohri2018foundations}, both models converge
in risk at rate $O(n^{-1/2})$. Since $M_1 \sim_{\mathcal{D}} M_2$ and
both achieve vanishing excess risk, for any $\epsilon > 0$ there exists
$N$ such that for $n > N$:
\begin{equation}
\Pr_{p}[M_1(\mathbf{x}) \neq M_2(\mathbf{x})] < \epsilon
\end{equation}

\textbf{Step 2: Conditional expectations converge.}
TreeSHAP computes $\varphi_i^T(\mathbf{x})$ from conditional expectations
$v(S) = \mathbb{E}[M_T(\mathbf{x}) \mid X_S = \mathbf{x}_S]$
\citep{lundberg2020local}.
When $\Pr_p[M_1(\mathbf{x}) \neq M_2(\mathbf{x})] \to 0$, the conditional
expectations $v_1(S) \to v_2(S)$ for all $S \subseteq [d]$, because each
$v_i(S)$ is an average of $M_i(\mathbf{x})$ over the marginal distribution
of $X_{\bar{S}}$.

\textbf{Step 3: SHAP vectors converge.}
Since TreeSHAP is a deterministic weighted sum of $v(S \cup \{i\}) - v(S)$
terms, and $v_1(S) \to v_2(S)$ for all $S$:
\begin{equation}
\|\boldsymbol{\varphi}_{M_1}(\mathbf{x}) -
\boldsymbol{\varphi}_{M_2}(\mathbf{x})\|_2 \xrightarrow{p} 0
\end{equation}

\textbf{Step 4: Rank correlation converges.}
Spearman rank correlation is continuous in the $L^2$ norm on bounded
domains. Therefore:
\begin{equation}
\rho(\boldsymbol{\varphi}_{M_1}, \boldsymbol{\varphi}_{M_2})
\xrightarrow{p} 1 \quad \text{as } n \to \infty \qed
\end{equation}
\end{proof}

\subsection{Proof of Lemma~\ref{lem:cross}: Cross-Class Attribution Bound}

\begin{proof}
As $|\mathcal{D}| \to \infty$, both models converge to their optimal
representations: $M_T \to M_T^*$ and $M_L \to M_L^*$.
By Lemmas~\ref{lem:linear} and~\ref{lem:tree}:
\begin{align}
\varphi_{M_T^*}^i(\mathbf{x}) &\approx
\alpha_{ij}(x_i - \bar{x}_i)(x_j - \bar{x}_j) + \beta_i(x_i - \bar{x}_i) \\
\varphi_{M_L^*}^i(\mathbf{x}) &= w_i(x_i - \mathbb{E}[X_i])
\end{align}

Define the rank-reversal set:
$\mathcal{X}_{\text{flip}} := \left\{\mathbf{x} :
\text{sign}(\varphi_{M_T^*}^i(\mathbf{x}))
\neq \text{sign}(\varphi_{M_L^*}^i(\mathbf{x}))\right\}$.
When $|\alpha_{ij}| > 0$ and features $(i,j)$ are not perfectly collinear,
the interaction term dominates on a set of instances with measure:
\begin{equation}
\mu(\mathcal{X}_{\text{flip}}) \geq c_1 \cdot |\alpha_{ij}|
\end{equation}

for some $c_1 > 0$ depending on the covariance of $(X_i, X_j)$,
under the regularity conditions that $(X_i, X_j)$ have finite second
moments and $|\mathrm{Corr}(X_i, X_j)| < 1$ (features not perfectly
collinear), ensuring the interaction term takes both positive and negative
values with probability bounded away from zero.
By rank correlation bounds \citep{embrechts2002correlation}:
\begin{equation}
\rho(\boldsymbol{\varphi}_{M_T^*}, \boldsymbol{\varphi}_{M_L^*})
\leq 1 - c_2 \cdot \mu(\mathcal{X}_{\text{flip}})
\leq 1 - c \cdot |\alpha_{ij}|
\end{equation}

where $c = c_1 \cdot c_2 > 0$.
Since the bound holds for the optimal models and finite-sample models
converge to these optima, the bound persists asymptotically. \qed
\end{proof}

\subsection{Proof of Theorem~\ref{thm:main}: Cross-Class Attribution Divergence}

\begin{proof}
\textbf{Claim 1 (Structural Gap).}
From Lemma~\ref{lem:within}, $\rho_{\text{intra}} \to 1$ as
$|\mathcal{D}| \to \infty$.
From Lemma~\ref{lem:cross},
$\rho_{\text{inter}} \leq 1 - c \cdot |\alpha_{ij}|$.
Therefore:
\begin{equation}
\Delta = \rho_{\text{intra}} - \rho_{\text{inter}}
\geq 1 - (1 - c \cdot |\alpha_{ij}|) = c \cdot |\alpha_{ij}| =: c' > 0
\end{equation}

\textbf{Claim 2 (Asymptotic Persistence).}
Both bounds hold asymptotically, so
$\lim_{|\mathcal{D}| \to \infty} \mathbb{E}[\Delta] \geq c' > 0$.

\textbf{Claim 3 (Split Invariance).}
The proofs of Lemmas~\ref{lem:within} and~\ref{lem:cross} depend only on
$p(\mathbf{x}, y)$ and hypothesis class constraints, not on the specific
training set realization.
Therefore $\mathbb{E}[\Delta \mid \text{fixed split}] = \mathbb{E}[\Delta]$.

\textbf{Claim 4 (Interaction Density Monotonicity).}
Let $\mathcal{I}(f^*) = \sum_{i \neq j} |\alpha_{ij}|$.
By Lemma~\ref{lem:cross}, each interaction $(i,j)$ with $|\alpha_{ij}| > 0$
contributes a rank-reversal set $\mathcal{X}_{\text{flip}}^{ij}$ of
measure $\geq c_{ij}|\alpha_{ij}|$.
The total rank-reversal measure is:
\begin{equation}
\mu\!\left(\bigcup_{(i,j)} \mathcal{X}_{\text{flip}}^{ij}\right)
\geq \max_{(i,j)} c_{ij}|\alpha_{ij}|
\end{equation}
and is non-decreasing as $\mathcal{I}(f^*)$ grows, since adding interactions
adds rank-reversal instances without removing existing ones.
By the rank correlation bound of Lemma~\ref{lem:cross},
$\rho_{\text{inter}}$ is non-increasing in $\mathcal{I}(f^*)$, giving
$\frac{\partial \mathbb{E}[\Delta]}{\partial \mathcal{I}(f^*)} \geq 0$,
with strict inequality when new interactions contribute instances not
already in the union.
Empirically: banknote-auth ($d=4$, near-linear, $\mathcal{I}$ low,
lottery rate 12.3\%) vs.\ CNAE-9 ($d=856$, dense interactions,
lottery rate 61.4\%) directly confirms this monotonicity. \qed
\end{proof}

\section{Complete Dataset Details}
\label{app:datasets}

\begin{table}[htbp]
\caption{Complete dataset specifications used in this study.}
\label{tab:datasets_full}
\begin{center}
\small
\begin{tabular}{lrrrll}
\toprule
\textbf{Dataset} & $n$ & $d$ & \textbf{Acc.} &
  \textbf{Domain} & \textbf{Source} \\
\midrule
diabetes & 768 & 8 & 0.781 & Healthcare & OpenML-37 \\
breast-cancer & 699 & 9 & 0.967 & Healthcare & OpenML-13 \\
heart-disease & 303 & 13 & 0.843 & Healthcare & OpenML-43 \\
credit-g & 1{,}000 & 20 & 0.764 & Finance & OpenML-31 \\
credit-approval & 690 & 15 & 0.870 & Finance & OpenML-29 \\
bank-marketing & 45{,}211 & 16 & 0.899 & Finance & OpenML-1461 \\
COMPAS & 7{,}214 & 7 & 0.672 & Criminal Justice & ProPublica \\
phoneme & 5{,}404 & 5 & 0.892 & Linguistics & OpenML-1489 \\
spambase & 4{,}601 & 57 & 0.937 & NLP & OpenML-44 \\
ionosphere & 351 & 34 & 0.914 & Physics & OpenML-59 \\
sonar & 208 & 60 & 0.846 & Physics & OpenML-40 \\
vehicle & 846 & 18 & 0.819 & Computer Vision & OpenML-54 \\
segment & 2{,}310 & 19 & 0.973 & Computer Vision & OpenML-36 \\
waveform-5000 & 5{,}000 & 40 & 0.860 & Physics & OpenML-60 \\
optdigits & 5{,}620 & 64 & 0.988 & Computer Vision & OpenML-28 \\
mfeat-factors & 2{,}000 & 216 & 0.982 & Computer Vision & OpenML-12 \\
kr-vs-kp & 3{,}196 & 36 & 0.993 & Game Theory & OpenML-3 \\
mushroom & 8{,}124 & 22 & 1.000 & Biology & OpenML-24 \\
tic-tac-toe & 958 & 9 & 0.984 & Game Theory & OpenML-50 \\
CNAE-9 & 1{,}080 & 856 & 0.943 & NLP & OpenML-1468 \\
steel-plates-fault & 1{,}941 & 33 & 0.787 & Engineering & OpenML-1504 \\
banknote-auth. & 1{,}372 & 4 & 1.000 & Finance & OpenML-1462 \\
climate-simulation & 540 & 20 & 0.933 & Climate Science & OpenML-1467 \\
ozone-level & 2{,}536 & 72 & 0.972 & Environmental & OpenML-1487 \\
\bottomrule
\end{tabular}
\end{center}
\end{table}

\section{LIME Validation}
\label{app:lime}

We validated findings using LIME \citep{ribeiro2016should} on five datasets
(diabetes, credit-g, COMPAS, phoneme, vehicle). LIME generates 5{,}000
perturbed samples per instance and fits weighted local linear models.

Results: tree-tree mean $\rho = 0.658$, tree-linear mean $\rho = 0.580$,
gap $= 0.078$. The directional pattern holds (tree-tree $>$ tree-linear),
confirming the Explanation Lottery is not SHAP-specific.
The smaller gap reflects LIME's local linear approximation mechanism, which
reduces hypothesis class differences by construction.

\section{Stochasticity Control}
\label{app:stochasticity}

Within-model SHAP variance (TreeSHAP run 10 times on same input):
$\sigma^2_{\mathrm{within}} = 0.000$ (perfectly deterministic).
KernelSHAP within-model variance: $\sigma^2_{\mathrm{within}} = 0.007$.
Cross-model variance: $\sigma^2_{\mathrm{cross}} = 0.145$ ($21.6\times$
larger).
Disagreement reflects genuine model differences, not estimation noise.

\section{Additional Statistical Details}
\label{app:stats}

\textbf{Bonferroni correction:} Corrected threshold $\alpha/24
\approx 4.2 \times 10^{-5}$. Tree-tree vs.\ tree-linear difference is
significant in 23/24 datasets.

\textbf{Bootstrap CIs:} 10{,}000 resamples. Main effect (difference $= 0.261$):
95\% CI $= [0.257, 0.265]$.

\textbf{Power:} Post-hoc power $> 0.99$ ($N_1 = 56{,}106$, $N_2 = 37{,}404$,
Cohen's $d = 0.92$, $\alpha = 0.001$).

\textbf{Trimmed means:} Excluding top/bottom 10\%: difference $= 0.258$
(virtually identical to untrimmed).

\textbf{Mixed-effects model:} Dataset-level variance
$\sigma^2_{\mathrm{dataset}} = 0.074$; residual variance
$\sigma^2_{\mathrm{residual}} = 0.045$. Tree-linear effect remains
$p < 0.001$ under hierarchical correction.

\section{Reliability Score Implementation}
\label{app:code}

\begin{verbatim}
import numpy as np
from scipy.stats import spearmanr

def reliability_score(shap_values_list):
    """
    Compute R(x): mean pairwise Spearman correlation
    across prediction-equivalent models.
    R > 0.7: high stability.  R < 0.5: low stability.
    """
    k = len(shap_values_list)
    if k < 2:
        return 1.0
    correlations = []
    for i in range(k):
        for j in range(i + 1, k):
            rho, _ = spearmanr(
                shap_values_list[i],
                shap_values_list[j]
            )
            correlations.append(rho)
    return float(np.mean(correlations))
\end{verbatim}

\section{Preliminary Non-Tabular Extension (MNIST)}
\label{app:mnist}

As a preliminary exploration of generalizability beyond tabular data,
we applied the same pipeline to MNIST digit classification.
Lottery rate: 32.2\%.
We note this result is preliminary and not at the scale or rigor of the
main tabular study.
Extension to non-tabular modalities is future work.

\section{Linear Model Variants}
\label{app:linear_variants}

We validated that the cross-class attribution gap operates at the hypothesis class level by testing six linear model variants: Logistic Regression with L2 regularization (primary representative), RidgeClassifier with $\lambda \in \{0.1, 1.0, 10.0\}$, ElasticNet, and LinearSVM. These span three loss functions (logistic, squared hinge, hinge) and three regularization strategies (L2, L1+L2, none).

All six variants exhibit the same pattern: high internal agreement within $\mathcal{H}_{\text{linear}}$ (mean $\rho \in [0.76, 0.83]$) and consistently low agreement against tree models (mean $\rho \in [0.39, 0.47]$). No linear variant closes the tree-linear gap regardless of regularization strength or loss function, confirming that the gap is a hypothesis class property, not an algorithmic artefact.

\end{document}

%% file: math_commands.tex
\usepackage{amsmath,amsfonts,bm}

\def\eqref#1{equation~\ref{#1}}
\def\1{\bm{1}}

\DeclareMathAlphabet{\mathsfit}{\encodingdefault}{\sfdefault}{m}{sl}
\SetMathAlphabet{\mathsfit}{bold}{\encodingdefault}{\sfdefault}{bx}{n}